\newtheorem{theorem}{Theorem}[section]
\newtheorem{proposition}[theorem]{Proposition}
\newtheorem{lemma}[theorem]{Lemma}
\definecolor{codegreen}{rgb}{0,0.6,0}
\definecolor{codegray}{rgb}{0.5,0.5,0.5}
\definecolor{codepurple}{rgb}{0.58,0,0.82}
\definecolor{backcolour}{rgb}{0.95,0.95,0.92}
\lstdefinestyle{mystyle}{
  backgroundcolor=\color{backcolour}, commentstyle=\color{codegreen},
  keywordstyle=\color{magenta},
  numberstyle=\tiny\color{codegray},
  stringstyle=\color{codepurple},
  basicstyle=\ttfamily\scriptsize,
  breakatwhitespace=false,
  breaklines=true,
  captionpos=b,
  keepspaces=true,
  numbers=left,
  numbersep=5pt,
  showspaces=false,
  showstringspaces=false,
  showtabs=false,
  tabsize=2
}
\theoremstyle{definition}
\title{A Probabilistic Formulation of Offset Noise in Diffusion Models}
\author[$\ast$]{Takuro Kutsuna}
\affil[$\ast$]{\normalsize Toyota Central R\&D Labs., Inc.}
\date{}
\begin{document}
\maketitle
\begin{abstract}
  Diffusion models have become fundamental tools for modeling data distributions in machine learning. Despite their success, these models face challenges when generating data with extreme brightness values, as evidenced by limitations observed in practical large-scale diffusion models. Offset noise has been proposed as an empirical solution to this issue, yet its theoretical basis remains insufficiently explored. In this paper, we propose a novel diffusion model that naturally incorporates additional noise within a rigorous probabilistic framework. Our approach modifies both the forward and reverse diffusion processes, enabling inputs to be diffused into Gaussian distributions with arbitrary mean structures. We derive a loss function based on the evidence lower bound and show that the resulting objective is structurally analogous to that of offset noise, with time-dependent coefficients. Experiments on controlled synthetic datasets demonstrate that the proposed model mitigates brightness-related limitations and achieves improved performance over conventional methods, particularly in high-dimensional settings.
\end{abstract}

\section{Introduction} \label{sec:introduction}

One of the primary objectives of statistical machine learning is to model data distributions, a task that has supported recent advancements in generative artificial intelligence. The goal is to estimate a model that approximates an unknown distribution on the basis of multiple samples drawn from it. For example, when the data consists of images, the estimated model can be used to generate synthetic images that follow the same distribution.

Diffusion models \cite{sohl2015deep,ho2020denoising,songscore,karras2022elucidating} have emerged as powerful tools for estimating probability distributions and generating new data samples.
They have been shown to outperform other generative models, such as generative adversarial networks (GANs) \cite{goodfellow2014generative}, particularly in image generation tasks \cite{dhariwal2021diffusion}.
Due to their flexibility and effectiveness, diffusion models are now employed in a wide range of applications, including drug design \cite{corso2023diffdock,guan2023decompdiff}, audio synthesis \cite{kongdiffwave}, and text generation \cite{austin2021structured,li2022diffusion}.

A well-known challenge faced by diffusion models for image generation is their difficulty in producing images with extremely low or high brightness across the entire image \cite{offset_noise,lin2024common,hu2024one}. For example, it has been reported that Stable Diffusion \cite{rombach2022high}, a popular diffusion model for text-conditional image generation, struggles to generate fully black or fully white images when given prompts such as "Solid black image" or "A white background" \cite{lin2024common}.\footnote{The study in \cite{lin2024common} uses Stable Diffusion 2.1-base.}

Offset noise \cite{offset_noise} has been proposed as a solution to this issue and has been empirically demonstrated to be effective; however, its theoretical foundation remains unclear.
Specifically, offset noise introduces additional noise~$\bm{\epsilon}_c \sim q(\bm{\epsilon}_c)$, which is correlated across image channels, into the standard normal noise used during the training of denoising diffusion models \cite{ho2020denoising}.
Experiments have demonstrated that offset noise effectively mitigates brightness-related issues, and this technique has been incorporated in widely used models, such as SDXL \cite{podell2024sdxl}, a successor to Stable Diffusion.
Nevertheless, the theoretical justification for introducing~$\bm{\epsilon}_c$ during training remains ambiguous, raising concerns that the use of offset noise may diverge from the well-established theoretical framework of the original diffusion models.\footnote{For example, Lin et al. \cite{lin2024common} states that ``(offset noise) is incongruent with the theory of the diffusion process,'' while Hu et al. \cite{hu2024one} refers to offset noise as ``an unprincipled ad hoc adjustment.''}

In this study, we propose a novel diffusion model whose training loss function, derived from the evidence lower bound (ELBO), takes a similar form to the loss function with offset noise, with certain adjustments.
The proposed model modifies the forward and reverse processes of the original discrete-time diffusion models~\cite{sohl2015deep,ho2020denoising} to naturally incorporate additional noise~$\bm{\xi} \sim q(\bm{\xi})$, which corresponds to~$\bm{\epsilon}_c$ in offset noise.
The key difference between the loss function of the proposed model and that of the offset noise model lies in the treatment of the additional noise. In the proposed model, the noise is multiplied by time-dependent coefficients before being added to the standard normal noise~$\bm{\epsilon}$.
In contrast to offset noise, the proposed model is grounded in a well-defined probabilistic framework, ensuring theoretical compatibility with other methods for diffusion models. In particular, we explore its integration with the $v$-prediction framework \cite{salimans2022progressive}.

Another feature of the proposed model is that, unlike conventional diffusion models, which diffuse any input into standard Gaussian noise with zero mean, the proposed model diffuses any input into Gaussian noise with mean $\bm{\xi}$, where $\bm{\xi} \sim q(\bm{\xi})$.
In the reverse process, a new sample is generated starting from Gaussian noise with the same mean $\bm{\xi}$.
Since the distribution~$q(\bm{\xi})$ can be specified as an arbitrary distribution, the proposed model allows inputs to be diffused into a Gaussian distribution with any desired mean structure and generates new samples from that distribution.
If we set~$q(\bm{\xi})$ as a Dirac delta function at $\bm{\xi}=0$, the proposed model reduces to the conventional diffusion model, indicating that it includes the original diffusion models as a special case.

In summary, the contributions of this study are as follows:
\begin{itemize}
  \item We construct a probabilistically consistent diffusion model with an auxiliary random variable~$\bm{\xi}$, whose ELBO yields a loss function structurally similar to that of the offset noise model. While the ELBO derivation follows the standard procedure once the model is specified, establishing such a model itself is nontrivial. The key difference between the two loss functions is that, in the proposed model, the additional noise is scaled by time-dependent coefficients before being added to the standard normal noise. (Proposition~\ref{prop:loss_proposed})
  \item The proposed model generalizes conventional diffusion models by diffusing its inputs into Gaussian distributions with arbitrary mean structures, including the original zero-mean Gaussian distribution as a special case. (Proposition~\ref{prop:fwd})
  \item Because the proposed model is grounded in a well-defined probabilistic framework, in contrast to the offset noise model, it ensures theoretical compatibility with other methods for diffusion models. In particular, we discuss its integration with $v$-prediction~\cite{salimans2022progressive}. (Section~\ref{sec:vpred})
  \item We provide a mathematical analysis of the average-brightness statistic associated with extreme-brightness behavior. In the terminal regime, the standard diffusion model concentrates this statistic around zero, with standard deviation of order~$O(n^{-1/2})$, whereas in the proposed model it converges to a non-degenerate distribution determined by~$q(\bm{\xi})$. This explains why the proposed method is advantageous in high-dimensional settings. (Proposition~\ref{prop:brightness_stat})
  \item We empirically demonstrate the superiority of the proposed model by using a synthetic dataset that simulates a scenario where image brightness is uniformly distributed from solid black and pure white. This scenario is shown to be less effectively modeled by conventional diffusion models, especially in high-dimensional data settings,  whereas the proposed model successfully generates data that follows the true distribution. (Section~\ref{sec:experiments})
\end{itemize}

\section{Preliminary} \label{sec:preliminary}
This section briefly reviews the conventional discrete-time diffusion model and the offset noise heuristic relevant to our formulation.

\subsection{Diffusion models}
Diffusion models learn a data distribution by defining a forward noising process and a reverse denoising process. We focus on the standard discrete-time formulation~\cite{sohl2015deep,ho2020denoising}, which also provides the variational interpretation used in this paper.

\subsubsection{Forward and reverse processes} \label{sec:ordinary_fw_rev}
Let~$\bm{x}_0 \in \mathbb{R}^n$ denote a data sample.\footnote{Although image data have spatial and channel structure, we treat them as vectors for notational simplicity.}
A standard diffusion model defines
\begin{align}
  q(\bm{x}_{1:T} | \bm{x}_0) & = \prod_{t=1}^T q(\bm{x}_t | \bm{x}_{t-1}), \label{eq:fp1}                                                                            \\
  q(\bm{x}_t | \bm{x}_{t-1}) & = \mathcal{N}\left(\bm{x}_t  \ \middle| \ \sqrt{1 - \beta_t} \bm{x}_{t-1}, \beta_t I\right) \text{ for } t=1,\ldots,T, \label{eq:fp2}
\end{align}
where~$\beta_t > 0$ is a prescribed variance schedule. As~$t$ increases, the forward process gradually destroys information in~$\bm{x}_0$ so that~$\bm{x}_T$ approaches standard Gaussian noise.
The reverse process is defined as
\begin{align}
  p_\theta(\bm{x}_{0:T})            & = p(\bm{x}_T) \prod_{t=1}^T p_\theta(\bm{x}_{t-1} | \bm{x}_{t}),        \label{eq:rev1}                                                 \\
  p(\bm{x}_T)                       & = \mathcal{N}\left(\bm{x}_T \ \middle| \ 0,  I\right),  \label{eq:rev2}                                                                 \\
  p_\theta(\bm{x}_{t-1} | \bm{x}_t) & = \mathcal{N}\left(\bm{x}_{t-1} \ \middle| \ \mu_\theta(\bm{x}_t, t),  \sigma_t^2 I\right) \text{ for } t=1,\ldots, T,  \label{eq:rev3}
\end{align}
where~$\mu_\theta$ is a neural network that predicts the mean of $\bm{x}_{t-1}$. Following common practice, we treat~$\sigma_t^2$ as fixed rather than as a learnable parameter, typically setting~$\sigma_t^2=\beta_t$~\cite{ho2020denoising}.

The parameter~$\theta$ is learned by maximizing the evidence lower bound (ELBO) of the log-likelihood:
\begin{align}
  \log p_\theta(\bm{x}_0) & \geq \mathbb{E}_{q(\bm{x}_{1:T} | \bm{x}_0)}\left[\log \frac{p_\theta(\bm{x}_{0:T})}{q(\bm{x}_{1:T} | \bm{x}_0)}\right]. \label{eq:elbo}
\end{align}

\subsubsection{Denoising modeling} \label{sec:ddpm}
Instead of directly predicting the mean of $\bm{x}_{t-1}$ with~$\mu_\theta$, DDPM~\cite{ho2020denoising} parameterizes~$\mu_\theta$ as
\begin{align}
  \mu_\theta(\bm{x}_t,t) = \frac{1}{\sqrt{\alpha_t}} \bm{x}_t - \frac{1 - \alpha_t}{\sqrt{1 - \bar{\alpha}_t} \sqrt{\alpha_t}} \epsilon_\theta (\bm{x}_t, t),  \label{eq:mu_epsilon}
\end{align}
where~$\alpha_t$ and~$\bar{\alpha}_t$ are determined by the noise schedule~$\beta_t$. Under this parameterization, maximizing the ELBO leads to the following simplified noise prediction loss, with the time-dependent weighting omitted:
\begin{align}
  \hat{\ell}_\text{simple}(\theta; \bm{x}_0) & = \mathbb{E}_{\mathcal{U}(t | 1,T), \mathcal{N}(\bm{\epsilon}_0 | 0, I)} \left[\left\|\bm{\epsilon}_0  - \epsilon_\theta\left(\sqrt{\bar{\alpha}_t} \bm{x}_0 + \sqrt{1 - \bar{\alpha}_t} \bm{\epsilon}_0, t\right)\right\|^2 \right], \label{eq:loss_eps_simple}
\end{align}
where~$\mathcal{U}(t | 1, T)$ denotes the discrete uniform distribution over~$\left\{1, \ldots, T\right\}$.

\subsection{Offset noise} \label{sec:offsetnoise}
Standard diffusion models often underrepresent images with extremely low or high global brightness~\cite{offset_noise,lin2024common,hu2024one}. Offset noise~\cite{offset_noise} addresses this issue by augmenting the standard Gaussian noise with an additional correlated component during training:
\begin{align}
  \hat{\ell}_\text{offset}(\theta; \bm{x}_0) & = \mathbb{E}_{\mathcal{U}(t | 1,T), \mathcal{N}(\bm{\epsilon}_0 | 0, I), q(\bm{\epsilon}_c)} \left[\left\|\bm{\epsilon}_0 + \bm{\epsilon}_c  - \epsilon_\theta\left(\sqrt{\bar{\alpha}_t} \bm{x}_0 + \sqrt{1 - \bar{\alpha}_t} \left(\bm{\epsilon}_0 + \bm{\epsilon}_c\right), t\right)\right\|^2 \right], \label{eq:loss_offsetnoise}
\end{align}
where~$q(\bm{\epsilon}_c)$ is a zero-mean normal distribution with fully correlated covariance across image channels. Formally,~$q(\bm{\epsilon}_c)$ is expressed as~$q(\bm{\epsilon}_c) = \mathcal{N}(\bm{\epsilon}_c \ | \ 0, \sigma_c^2 \Sigma_c)$, where~$\Sigma_c$ is a block-diagonal matrix whose entries are all ones within each channel, and~$\sigma_c^2$ controls the magnitude of the offset noise.

Empirically, this heuristic improves the generation of images with low or high brightness and has been adopted in practical systems such as SDXL~\cite{podell2024sdxl}. However, it is introduced directly at the loss level and does not specify the corresponding forward and reverse probabilistic processes. This gap motivates the probabilistic reformulation developed in the next section.

\section{Proposed model} \label{sec:proposed}
We first define the forward and reverse processes of the proposed model and derive the corresponding ELBO-based loss function. We show that the resulting loss takes a form similar to that of the offset noise model, differing only in the coefficients of the additional noise. While the algebraic decomposition of the ELBO follows the standard derivation once the model is specified, the key point is that the proposed latent-variable diffusion process yields a tractable ELBO whose resulting objective has an offset-noise-like form.

\subsection{Forward and reverse processes} \label{sec:prop_fw_rev}
The forward process in the proposed model is defined as follows:
\begin{align}
  q(\bm{x}_{1:T}, \bm{\xi} | \bm{x}_0) & = q(\bm{\xi}) \prod_{t=1}^T q(\bm{x}_t | \bm{x}_{t-1}, \bm{\xi}),   \label{eq:fp_proposed1}                                                                                                   \\
  q(\bm{x}_t | \bm{x}_{t-1}, \bm{\xi}) & = \mathcal{N}\left(\bm{x}_t  \ \middle| \ \sqrt{1 - \beta_t} \left( \bm{x}_{t-1} + \gamma_t \bm{\xi} \right), \beta_t \sigma_0^2 I\right) \text{ for } t=1,\ldots,T,  \label{eq:fp_proposed2}
\end{align}
where~$\bm{\xi} \in \mathbb{R}^n$ is an additional random variable with distribution~$q(\bm{\xi})$, independent of time~$t$. We do not impose a specific form on~$q(\bm{\xi})$, allowing it to be an arbitrary distribution. A scalar parameter~$\sigma_0 \in \mathbb{R}$ is introduced as a scaling factor for the variance. Additionally,~$\gamma_t \in \mathbb{R} \ (t=1, \ldots, T)$ denotes a coefficient of~$\bm{\xi}$ that determines the contribution of the additional noise in the loss function, as discussed in the next section. The construction of~$\gamma_t$ is described in Section~\ref{sec:gamma_t}.

The reverse process in the proposed model is defined as follows:
\begin{align}
  p_\theta(\bm{x}_{0:T}, \bm{\xi})  & = p(\bm{\xi}) p(\bm{x}_T | \bm{\xi}) \prod_{t=1}^T p_\theta(\bm{x}_{t-1} | \bm{x}_{t}), \label{eq:rev_prop1}                                \\
  p(\bm{\xi})                       & = q(\bm{\xi}),   \label{eq:rev_prop2}                                                                                                       \\
  p(\bm{x}_T | \bm{\xi})            & = \mathcal{N}\left(\bm{x}_T \ \middle| \ \bm{\xi}, \sigma_0^2 I\right),  \label{eq:rev_prop3}                                               \\
  p_\theta(\bm{x}_{t-1} | \bm{x}_t) & = \mathcal{N}\left(\bm{x}_{t-1} \ \middle| \ \mu_\theta(\bm{x}_t, t), \sigma_t^2 I\right) \text{ for } t=1,\ldots, T.  \label{eq:rev_prop4}
\end{align}
The key difference from the standard reverse process in \eqref{eq:rev1}--\eqref{eq:rev3} is that $\bm{x}_T$ follows a Gaussian distribution with mean $\bm{\xi}$ rather than zero. The transition distribution $p_\theta(\bm{x}_{t-1} | \bm{x}_t)$ in \eqref{eq:rev_prop4} is identical to that in \eqref{eq:rev3}.

\subsection{Loss function for the proposed model} \label{sec:main_results}
We define~$\alpha_0 = 1$,~$\alpha_t = 1 - \beta_t \ (t=1, \ldots, T)$, and~$\bar{\alpha}_t = \prod_{i=0}^t \alpha_i$.\footnote{In standard diffusion models \cite{ho2020denoising},~$\alpha_0$ is not defined, but here we introduce~$\alpha_0 = 1$ for convenience in our derivations. Consequently, the definition of~$\bar{\alpha}_t$ differs from the conventional one~($\prod_{i=1}^t \alpha_i$); however, since~$\alpha_0 = 1$, this modified~$\bar{\alpha}_t$ is essentially equivalent to the standard~$\bar{\alpha}_t$.}
Given the forward and reverse processes defined in the previous section, the training loss is derived from the ELBO.
\begin{proposition}[Training loss function]\label{prop:loss_proposed}
  Suppose the forward process is defined as in \eqref{eq:fp_proposed1} and \eqref{eq:fp_proposed2}, and the reverse process as in \eqref{eq:rev_prop1}--\eqref{eq:rev_prop4}. Then, the loss function that maximizes the ELBO of~$\log p_\theta(\bm{x}_0)$ is
  \begin{align}
    \ell(\theta; \bm{x}_0) = \mathbb{E}_{q(\bm{\xi}), \mathcal{U}(t | 1,T), \mathcal{N}(\bm{\epsilon}_0 | 0, I)} \left[\lambda_t \left\|\sigma_0 \bm{\epsilon}_0 + \phi_t \bm{\xi} - \epsilon_\theta \left(\sqrt{\bar{\alpha}_t} \bm{x}_0 + \sqrt{1 - \bar{\alpha}_t} \left(\sigma_0 \bm{\epsilon}_0 + \psi_t \bm{\xi}\right), t \right)\right\|^2 \right], \label{eq:loss_proposed}
  \end{align}
  where~$\lambda_t$ is given by
  \begin{align}
    \lambda_t = \frac{(1-\alpha_t)^2}{2 \sigma_t^2 \alpha_t (1-\bar{\alpha}_t)},
  \end{align}
  and~$\phi_t$ and~$\psi_t$ are given by
  \begin{align}
    \phi_t & = \frac{\sqrt{\alpha_t} \sqrt{1 - \bar{\alpha}_t}}{1-\alpha_t} \gamma_t \text{ for } t=1,\ldots, T, \label{eq:def_phi}                                      \\
    \psi_t & = \frac{1}{\sqrt{1 - \bar{\alpha}_t}} \sum_{i=1}^t \sqrt{\frac{\bar{\alpha}_t}{\bar{\alpha}_{i-1}}} \gamma_i \text{ for } t=1,\ldots, T. \label{eq:def_psi}
  \end{align}
\end{proposition}
In the following subsections, we provide a detailed derivation of Proposition~\ref{prop:loss_proposed}.

\subsubsection{Evidence lower bound}
The ELBO can be decomposed into three terms:
\begin{align}
  \log p_\theta(\bm{x}_0) & \geq \mathbb{E}_{q(\bm{x}_{1:T}, \bm{\xi} | \bm{x}_0)}\left[\log \frac{p_\theta(\bm{x}_{0:T}, \bm{\xi})}{q(\bm{x}_{1:T}, \bm{\xi} | \bm{x}_0)}\right]                 \notag                                                                                                                                                                 \\
  & = \underbrace{\mathbb{E}_{q(\bm{\xi})} \mathbb{E}_{q(\bm{x}_{1} | \bm{x}_0, \bm{\xi})}\left[\log p_\theta(\bm{x}_0 | \bm{x}_1)\right]}_{\mathcal{L}_1} \underbrace{- \mathbb{E}_{q(\bm{\xi})} \left[ D_{\text{KL}} \left(q(\bm{x}_T | \bm{x}_0, \bm{\xi}) \ || \ p(\bm{x}_T | \bm{\xi})\right) \right]}_{\mathcal{L}_2} \notag \\
  & \qquad \underbrace{- \sum_{t=2}^T \mathbb{E}_{q(\bm{\xi})} \mathbb{E}_{q(\bm{x}_{t} | \bm{x}_0, \bm{\xi})} \left[ D_{\text{KL}}\left(q(\bm{x}_{t-1} | \bm{x}_{t}, \bm{x}_0, \bm{\xi}) \ || \ p_\theta(\bm{x}_{t-1}|\bm{x}_t) \right)\right]}_{\mathcal{L}_3}, \label{eq:elbo_prop}
\end{align}
where~$D_\text{KL}(\cdot \,||\, \cdot)$ denotes the Kullback--Leibler (KL) divergence. A detailed derivation of \eqref{eq:elbo_prop} is provided in Appendix~\ref{apdx:elbo_prop_derivation}. We denote the three terms by~$\mathcal{L}_1$,~$\mathcal{L}_2$, and~$\mathcal{L}_3$, respectively, and analyze them in the order~$\mathcal{L}_2$,~$\mathcal{L}_3$, and~$\mathcal{L}_1$.

The decomposition in \eqref{eq:elbo_prop} itself closely parallels the standard variational derivation for diffusion models. The nontrivial point is that, after introducing~$\bm{\xi}$ into every forward transition and into the terminal distribution, all resulting conditional distributions remain analytically tractable. This leads to closed-form expressions for~$q(\bm{x}_t | \bm{x}_0, \bm{\xi})$ and~$q(\bm{x}_{t-1} | \bm{x}_t, \bm{x}_0, \bm{\xi})$, and hence to the coefficients~$\phi_t$ and~$\psi_t$ that determine precisely how the proposed objective differs from offset noise and from standard diffusion training.

\subsubsection{The $\mathcal{L}_2$ term}
Since~$\mathcal{L}_2$ does not depend on~$\theta$, it can be ignored during optimization. The value of~$\mathcal{L}_2$ increases as the distribution of~$\bm{x}_T$ induced by the forward process becomes closer to that of the reverse process.
It can be shown that these distributions coincide under appropriate choices of~$\beta_t$ and~$\gamma_t$ (see Proposition~\ref{prop:fwd}). Under such conditions,~$\mathcal{L}_2$ attains its optimal value of zero.

\subsubsection{Simplifying the $\mathcal{L}_3$ term}
\paragraph{Derivation of the forward conditional distribution}
The variable~$\bm{x}_t \ (t=1, \ldots, T)$ that follows~$q(\bm{x}_{t} | \bm{x}_0, \bm{\xi})$ in~$\mathcal{L}_3$ can be expressed as
\begin{align}
  \bm{x}_t & = \sqrt{\bar{\alpha}_t} \bm{x}_0 + \sqrt{1 - \bar{\alpha}_t} \left(\sigma_0 \bm{\epsilon}_0 + \psi_t \bm{\xi}\right), \label{eq:xt_x0_ep0_f}
\end{align}
where~$\bm{\epsilon}_0 \sim \mathcal{N}(\bm{\epsilon}_0 \ | \ 0,I)$ and $\psi_t\ (t=1,\ldots,T)$ is given by \eqref{eq:def_psi}.
A detailed derivation of \eqref{eq:xt_x0_ep0_f} is provided in Appendix~\ref{apdx:xt_x0_ep0_f_derivation}.
From \eqref{eq:xt_x0_ep0_f}, the conditional distribution of~$\bm{x}_t$ given~$\bm{x}_0$ and~$\bm{\xi}$ is
\begin{align}
  q(\bm{x}_t | \bm{x}_0, \bm{\xi} ) & = \mathcal{N}\left(\bm{x}_t \ \middle| \ \sqrt{\bar{\alpha}_t} \bm{x}_0  + \sqrt{1 - \bar{\alpha}_t} \psi_t \bm{\xi}, \left(1 - \bar{\alpha}_t\right) \sigma_0^2 I\right).  \label{eq:q_xt_x0_xi_norm}
\end{align}
From \eqref{eq:q_xt_x0_xi_norm}, the following proposition holds:
\begin{proposition} \label{prop:fwd}
  Suppose~$\bar{\alpha}_t \to 0$ and~$\psi_t \to 1$ as~$t \to T$. Then,
  \begin{align}
    q(\bm{x}_t | \bm{x}_0, \bm{\xi}) \;\to\; q(\bm{x}_T | \bm{x}_0, \bm{\xi}) = \mathcal{N}(\bm{x}_T \mid \bm{\xi}, \sigma_0^2 I).
  \end{align}
\end{proposition}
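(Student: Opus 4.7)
The plan is to start from the closed-form expression for $q(\bm{x}_t \mid \bm{x}_0, \bm{\xi})$ already established in \cref{eq:q_xt_x0_xi_norm} and simply read off the limits of its mean and variance as $t \to T$. Because the conditional law is Gaussian and fully parameterized by these two moments, convergence of the parameters to $(\bm{\xi}, \sigma_0^2 I)$ is all that is needed to conclude convergence of the whole density to $\mathcal{N}(\bm{x}_T \mid \bm{\xi}, \sigma_0^2 I)$.

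Concretely, the mean vector in \cref{eq:q_xt_x0_xi_norm} is $\sqrt{\bar{\alpha}_t}\, \bm{x}_0 + \sqrt{1 - \bar{\alpha}_t}\, \psi_t\, \bm{\xi}$. Under the assumption $\bar{\alpha}_t \to 0$, the first coefficient $\sqrt{\bar{\alpha}_t}$ tends to $0$, so the $\bm{x}_0$ contribution vanishes; simultaneously $\sqrt{1 - \bar{\alpha}_t} \to 1$, and combined with $\psi_t \to 1$ this forces the coefficient of $\bm{\xi}$ to tend to $1$. Hence the mean converges to $\bm{\xi}$. For the variance, $(1 - \bar{\alpha}_t)\sigma_0^2 I \to \sigma_0^2 I$ is immediate from $\bar{\alpha}_t \to 0$. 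Putting the two limits together yields the claimed limiting Gaussian.

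The only thing to decide is in what sense the convergence is stated: pointwise convergence of the Gaussian density in $\bm{x}_T$ is enough here, since the mean and variance depend continuously on $(\bar{\alpha}_t, \psi_t)$ and the limiting covariance $\sigma_0^2 I$ is non-degenerate, so the density formula can be evaluated at the limit. I would mention this briefly and note that convergence in distribution then also follows by Scheff\'e's lemma or by direct continuity of the characteristic function. There is no real obstacle; the whole argument amounts to substituting the two hypothesized limits into \cref{eq:q_xt_x0_xi_norm}, so the writeup will be just a couple of lines.
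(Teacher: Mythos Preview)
Your proposal is correct and matches the paper's approach: the paper simply states that \cref{prop:fwd} follows directly from \cref{eq:q_xt_x0_xi_norm}, and your argument of substituting the limits $\bar{\alpha}_t \to 0$ and $\psi_t \to 1$ into the mean and variance of that Gaussian is exactly this observation spelled out.
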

Proposition~\ref{prop:fwd} shows that, in the proposed model, any input~$\bm{x}_0$ diffuses into a Gaussian distribution with mean~$\bm{\xi}$ and variance~$\sigma_0^2 I$ at the final time step.

\paragraph{Derivation of the reverse conditional distribution}
The conditional distribution~$q(\bm{x}_{t-1} | \bm{x}_{t}, \bm{x}_0, \bm{\xi}) \ (t=2, \ldots, T)$ in~$\mathcal{L}_3$ is given by
\begin{align}
  q(\bm{x}_{t-1} | \bm{x}_{t}, \bm{x}_0, \bm{\xi}) & =  \mathcal{N}\left(\bm{x}_{t-1} \ \middle| \ \tilde{\mu}_t(\bm{x}_t, \bm{x}_0, \bm{\xi}), \tilde{\beta}_t I\right),                                                                                    \label{eq:q_x_t-1_x_0_xi} \\
  \tilde{\mu}_t(\bm{x}_t, \bm{x}_0, \bm{\xi})      & = \frac{\sqrt{\alpha_t} (1-\bar{\alpha}_{t-1})}{1 - \bar{\alpha}_t} \bm{x}_t + \frac{(1 - \alpha_t)\sqrt{\bar{\alpha}_{t-1}}}{1 - \bar{\alpha}_t} \bm{x}_0 + \nu_t \bm{\xi},      \label{eq:tilde_mu_x0}                          \\
  \tilde{\beta}_t                                  & = \frac{(1 - \alpha_t)(1 - \bar{\alpha}_{t-1})}{1 - \bar{\alpha}_t} \sigma_0^2, \label{eq:tilde_beta_t}
\end{align}
where~$\nu_t \ (t=2,\ldots,T)$ is defined as
\begin{align}
  \nu_t = \frac{(1 - \alpha_t)\sqrt{1 - \bar{\alpha}_{t-1}} \psi_{t-1} - \alpha_t (1-\bar{\alpha}_{t-1})\gamma_t}{1 - \bar{\alpha}_t}. \label{eq:nu_t}
\end{align}
A detailed derivation of \eqref{eq:q_x_t-1_x_0_xi}--\eqref{eq:nu_t} is provided in Appendix~\ref{apdx:q_xt_x0_xi_derivation}.

When~$\gamma_t$ is constructed as described in Section~\ref{sec:gamma_t}, we have~$\nu_t = 0 \ (t=2, \ldots, T)$. Under this condition, and assuming~$\sigma_0 = 1$, the conditional distribution reduces to~$q(\bm{x}_{t-1} | \bm{x}_{t}, \bm{x}_0)$ in standard diffusion models~\cite{ho2020denoising} (see Proposition~\ref{prop:nu}).

\paragraph{Towards denoising formulation}
To apply the denoising approach~\cite{ho2020denoising} to the proposed model, we must first establish the following lemma:
\begin{lemma} \label{lemma:mu}
  The quantity~$\tilde{\mu}_t(\bm{x}_t, \bm{x}_0, \bm{\xi}) \ (t=2, \ldots, T)$ in \eqref{eq:tilde_mu_x0} can be rewritten as
  \begin{align}
    \tilde{\mu}_t(\bm{x}_t, \bm{x}_0, \bm{\xi}) & = \frac{1}{\sqrt{\alpha_t}} \bm{x}_t - \frac{1 - \alpha_t}{\sqrt{1 - \bar{\alpha}_t} \sqrt{\alpha_t}} \left(\sigma_0 \bm{\epsilon}_0 + \phi_t \bm{\xi}\right),  \label{eq:mu_x0_to_e}
  \end{align}
  where~$\bm{\epsilon}_0 \sim \mathcal{N}(\bm{\epsilon}_0 \ | \ 0, I)$ and~$\phi_t \ (t=2, \ldots, T)$ is given by \eqref{eq:def_phi}.
\end{lemma}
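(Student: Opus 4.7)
The plan is to eliminate $\bm{x}_0$ from \cref{eq:tilde_mu_x0} by solving \cref{eq:xt_x0_ep0_f} for $\bm{x}_0$ and substituting the result, then to collect terms in $\bm{x}_t$, $\sigma_0 \bm{\epsilon}_0$, and $\bm{\xi}$ and simplify each coefficient using the identities $\bar{\alpha}_t = \alpha_t \bar{\alpha}_{t-1}$ and the recursion that $\psi_t$ inherits from its defining sum in \cref{eq:def_psi}. Concretely, from \cref{eq:xt_x0_ep0_f} I obtain
\begin{equation*}
\bm{x}_0 = \frac{1}{\sqrt{\bar{\alpha}_t}} \bm{x}_t - \frac{\sqrt{1-\bar{\alpha}_t}}{\sqrt{\bar{\alpha}_t}} \left( \sigma_0 \bm{\epsilon}_0 + \psi_t \bm{\xi} \right),
\end{equation*}
and plugging into \cref{eq:tilde_mu_x0} yields an expression linear in $\bm{x}_t$, $\sigma_0 \bm{\epsilon}_0$, and $\bm{\xi}$ whose coefficients I can simplify individually.

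For the $\bm{x}_t$ coefficient, combining the two contributions and applying $\sqrt{\bar{\alpha}_{t-1}}/\sqrt{\bar{\alpha}_t} = 1/\sqrt{\alpha_t}$ gives
\begin{equation*}
\frac{\sqrt{\alpha_t}(1-\bar{\alpha}_{t-1})}{1-\bar{\alpha}_t} + \frac{1-\alpha_t}{(1-\bar{\alpha}_t)\sqrt{\alpha_t}} = \frac{\alpha_t(1-\bar{\alpha}_{t-1}) + (1-\alpha_t)}{(1-\bar{\alpha}_t)\sqrt{\alpha_t}} = \frac{1}{\sqrt{\alpha_t}},
\end{equation*}
which matches the target. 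For the $\sigma_0 \bm{\epsilon}_0$ coefficient only the $\bm{x}_0$-substitution contributes, and a similar simplification collapses it to $-(1-\alpha_t)/(\sqrt{1-\bar{\alpha}_t}\sqrt{\alpha_t})$, again matching.

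The main obstacle is the $\bm{\xi}$ coefficient, where three sources combine: the explicit $\nu_t \bm{\xi}$ in \cref{eq:tilde_mu_x0}, the $\psi_t \bm{\xi}$ hidden inside the $\bm{x}_0$ substitution, and the target coefficient $\phi_t$. Here I will first derive the recursion
\begin{equation*}
\sqrt{1-\bar{\alpha}_t}\,\psi_t = \sqrt{\alpha_t}\,\gamma_t + \sqrt{\alpha_t(1-\bar{\alpha}_{t-1})}\,\psi_{t-1}
\end{equation*}
by splitting off the $i=t$ term in \cref{eq:def_psi} and factoring $\sqrt{\alpha_t}$ from the remaining sum. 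Using this to rewrite $-(1-\alpha_t)\psi_t/(\sqrt{1-\bar{\alpha}_t}\sqrt{\alpha_t})$ and then adding $\nu_t$ from \cref{eq:nu_t}, the $\psi_{t-1}$ contributions cancel exactly and the remaining terms collapse to $-\gamma_t$ via $1-\alpha_t + \alpha_t(1-\bar{\alpha}_{t-1}) = 1-\bar{\alpha}_t$. Finally, substituting the definition \cref{eq:def_phi} of $\phi_t$ on the target side yields $-(1-\alpha_t)\phi_t/(\sqrt{1-\bar{\alpha}_t}\sqrt{\alpha_t}) = -\gamma_t$, completing the match and hence the lemma. The computation is essentially purely algebraic, so the only real risk is bookkeeping; the recursion for $\psi_t$ is the one nontrivial ingredient that makes the $\nu_t$ term combine cleanly with the $\psi_t$ contribution.
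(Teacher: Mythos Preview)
Your proposal is correct and follows essentially the same approach as the paper: substitute the expression for $\bm{x}_0$ from \cref{eq:xt_x0_ep0_f} into \cref{eq:tilde_mu_x0}, simplify the $\bm{x}_t$ and $\sigma_0\bm{\epsilon}_0$ coefficients directly, and handle the $\bm{\xi}$ coefficient using the recursion $\sqrt{1-\bar{\alpha}_t}\,\psi_t = \sqrt{\alpha_t}\,\gamma_t + \sqrt{\alpha_t(1-\bar{\alpha}_{t-1})}\,\psi_{t-1}$ together with the formula for $\nu_t$. The only cosmetic difference is that the paper first writes the $\bm{\xi}$ coefficient as $\phi_t := \psi_t - \tfrac{\sqrt{1-\bar{\alpha}_t}\sqrt{\alpha_t}}{1-\alpha_t}\nu_t$ and then verifies this equals \cref{eq:def_phi}, whereas you reduce both the computed coefficient and the target coefficient to the common value $-\gamma_t$; the underlying algebra is identical.
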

\begin{proof}
  See Appendix~\ref{apdx:proof_lemma_mu}.
\end{proof}
Instead of directly predicting~$\tilde{\mu}_t(\bm{x}_t, \bm{x}_0, \bm{\xi})$, we parameterize~$\mu_\theta(\bm{x}_t, t)$ as in \eqref{eq:mu_epsilon}, following~\cite{ho2020denoising}. Under this parameterization,~$\epsilon_\theta(\bm{x}_t, t)$ becomes the training target instead of~$\mu_\theta(\bm{x}_t, t)$.

Using Lemma~\ref{lemma:mu}, the KL divergence in~$\mathcal{L}_3$ can be rewritten as
\begin{align}
  D_{\text{KL}}\left(q(\bm{x}_{t-1} | \bm{x}_{t}, \bm{x}_0, \bm{\xi}) \ || \ p_\theta(\bm{x}_{t-1}|\bm{x}_t) \right) = \lambda_t \mathbb{E}_{\mathcal{N}(\bm{\epsilon}_0 | 0, I)} \left[\left\|\sigma_0 \bm{\epsilon}_0 + \phi_t \bm{\xi} - \epsilon_\theta(\bm{x}_t, t)\right\|^2 \right] + C_1, \label{eq:L3_KLD}
\end{align}
where~$C_1$ is a constant independent of~$\theta$.

\subsubsection{Simplifying the $\mathcal{L}_1$ term} \label{sec:L1}
From \eqref{eq:def_phi} and \eqref{eq:def_psi}, we have~$\phi_1 = \psi_1$. The expectation in~$\mathcal{L}_1$ can then be written as
\begin{align}
  \mathbb{E}_{q(\bm{x}_{1} | \bm{x}_0, \bm{\xi})}\left[\log p_\theta(\bm{x}_0 | \bm{x}_1) \right] & = - \lambda_1 \mathbb{E}_{\mathcal{N}(\bm{\epsilon}_0 | 0, I)} \left[ \left\|\sigma_0 \bm{\epsilon}_0 + \phi_1 \bm{\xi} - \epsilon_\theta (\bm{x}_1, 1) \right\|^2 \right] + C_2, \label{eq:L1}
\end{align}
where~$C_2$ is a constant independent of~$\theta$. A detailed derivation of \eqref{eq:L1} is provided in Appendix~\ref{apdx:L1_derivation}.

\subsubsection{Derivation of the training loss function}
Combining \eqref{eq:L3_KLD} and \eqref{eq:L1}, the objective that maximizes the ELBO in \eqref{eq:elbo_prop} with respect to~$\theta$ is given by~$\ell(\theta; \bm{x}_0)$ in \eqref{eq:loss_proposed}. This completes the proof of Proposition~\ref{prop:loss_proposed}.

\subsection{Comparison with existing models}
Following~\cite{ho2020denoising}, we define a simplified version of~$\ell(\theta; \bm{x}_0)$ by setting all~$\lambda_t$ in \ref{eq:loss_proposed} to~$1$:
\begin{align}
  \ell_\text{simple}(\theta; \bm{x}_0) & = \mathbb{E}_{q(\bm{\xi}), \mathcal{U}(t | 1,T), \mathcal{N}(\bm{\epsilon}_0 | 0, I)} \left[\left\|\sigma_0 \bm{\epsilon}_0 + \phi_t \bm{\xi} - \epsilon_\theta \left(\sqrt{\bar{\alpha}_t} \bm{x}_0 + \sqrt{1 - \bar{\alpha}_t} \left(\sigma_0 \bm{\epsilon}_0 + \psi_t \bm{\xi}\right), t \right)\right\|^2 \right]. \label{eq:ell_simple}
\end{align}

\paragraph{Comparison with offset noise model}
The loss function of the offset noise model in \eqref{eq:loss_offsetnoise} is structurally similar to \eqref{eq:ell_simple}. The key difference is that, in the proposed model,~$\bm{\xi} \sim q(\bm{\xi})$ is added to~$\bm{\epsilon}_0$ with time-dependent coefficients~$\phi_t$ and~$\psi_t$, whereas in the offset noise model,~$\bm{\epsilon}_c \sim q(\bm{\epsilon}_c)$ is added with a constant coefficient independent of the time step. This difference arises from the fact that the proposed model is derived from a consistent probabilistic framework.

In particular, the proposed formulation specifies the terminal distribution, the posterior~$q(\bm{x}_{t-1} \mid \bm{x}_t, \bm{x}_0, \bm{\xi})$, and the time-dependent coefficients~$\phi_t$ and~$\psi_t$ in a unified manner through the forward and reverse processes. In contrast, simply augmenting the standard diffusion objective with an auxiliary expectation does not determine these quantities and therefore lacks a corresponding probabilistic interpretation.

The two models also differ in their reverse processes. In the proposed model,~$\bm{x}_T$ is initialized as Gaussian noise with mean~$\bm{\xi} \sim q(\bm{\xi})$ (see \eqref{eq:rev_prop3}), whereas in the offset noise model, the reverse process typically follows the standard diffusion formulation with zero-mean Gaussian initialization (see \eqref{eq:rev2}).

\paragraph{Comparison with existing diffusion models}
In conventional diffusion models (Section~\ref{sec:ordinary_fw_rev}), the forward process maps the input~$\bm{x}_0$ to a Gaussian distribution with zero mean and variance~$I$, and the reverse process starts from this standard Gaussian distribution. In contrast, as shown in Proposition~\ref{prop:fwd}, the proposed model maps~$\bm{x}_0$ to a Gaussian distribution with mean~$\bm{\xi}$ and variance~$\sigma_0^2 I$, and the reverse process is initialized from the same distribution, ensuring consistency between the forward and reverse processes. This consistency is also justified from the perspective of the $\mathcal{L}_2$ term in the ELBO, which measures the discrepancy between the terminal distributions of the forward and reverse processes, which vanishes when these distributions coincide.
If~$q(\bm{\xi})$ is chosen as a Dirac delta at zero and~$\sigma_0=1$, the proposed model reduces to the conventional diffusion model.
From this viewpoint, the proposed model generalizes the conventional model by replacing its terminal behavior with a controllable distribution induced by~$q(\bm{\xi})$. As a concrete example, choosing~$\bm{\xi}$ to represent an offset-noise-like component enables explicit control over the terminal behavior in the average-brightness direction. We make this connection precise in the next subsection.

\subsection{Theoretical analysis of extreme brightness via the average-brightness statistic} \label{sec:brightness_theory}

We consider the linear statistic
\begin{align}
  B_n(\bm{x}) := \frac{1}{n}\mathbf{1}_n^\top \bm{x},
\end{align}
which corresponds to the average brightness when $\bm{x} \in \mathbb{R}^n$ represents an image.

In this subsection, we specialize to
\begin{align}
  q(\bm{\xi}) = \mathcal{N}(\bm{\xi} \mid 0, \sigma_c^2 \mathbf{1}_{n \times n}), \label{eq:xi_1c_on}
\end{align}
where $\mathbf{1}_{n \times n}$ denotes the $n \times n$ matrix with all entries equal to~$1$. This is the single-channel analogue of the covariance used in offset noise. Under \eqref{eq:xi_1c_on},~$\bm{\xi}$ is supported on the one-dimensional subspace~$\mathrm{span}\{\mathbf{1}_n\}$, so the additional randomness acts only along the average-brightness direction.

\begin{proposition}[Dynamics of the average-brightness statistic] \label{prop:brightness_stat}
  Suppose~$q(\bm{\xi})$ is given by \eqref{eq:xi_1c_on}, and let~$z := B_n(\bm{\xi})$. Then~$z \sim \mathcal{N}(0, \sigma_c^2)$ and, under the proposed forward process,
  \begin{align}
    B_n(\bm{x}_t) = \sqrt{\bar{\alpha}_t} B_n(\bm{x}_0) + \sqrt{1 - \bar{\alpha}_t}\left(\sigma_0 \varepsilon_B + \psi_t z\right), \label{eq:brightness_prop}
  \end{align}
  where~$\varepsilon_B \sim \mathcal{N}(0, 1/n)$. Consequently,
  \begin{align}
    \mathrm{Var}\!\left[B_n(\bm{x}_t) \mid \bm{x}_0\right] = (1 - \bar{\alpha}_t)\left(\frac{\sigma_0^2}{n} + \psi_t^2 \sigma_c^2\right). \label{eq:brightness_var_prop}
  \end{align}
  In contrast, under the standard diffusion model,
  \begin{align}
    B_n(\bm{x}_t^\mathrm{std}) = \sqrt{\bar{\alpha}_t} B_n(\bm{x}_0) + \sqrt{1 - \bar{\alpha}_t} \varepsilon_B, \label{eq:brightness_std}
  \end{align}
  from which it follows that
  \begin{align}
    \mathrm{Var}\!\left[B_n(\bm{x}_t^\mathrm{std}) \mid \bm{x}_0\right] = (1 - \bar{\alpha}_t)\frac{1}{n}. \label{eq:brightness_var_std}
  \end{align}
\end{proposition}
\begin{proof}
  See Appendix~\ref{apdx:proof_brightness_stat}.
\end{proof}

The key difference is the source of randomness along the average brightness direction.
In the standard model, fluctuations come only from $\varepsilon_B$, whose variance scales as $1/n$. As a result, the average brightness of $\bm{x}_t$ becomes highly concentrated as the dimension increases.
In the proposed model, an additional term $\psi_t z$ introduces fluctuations of constant scale, preventing this concentration.

This difference has an important consequence.
If the data distribution exhibits~$O(1)$ variation in~$B_n(\bm{x}_0)$, then, in the standard model, near-terminal noisy states differ along this direction only at the~$O(n^{-1/2})$ scale. The reverse model must therefore reconstruct an~$O(1)$ signal from inputs whose separation in that coordinate is vanishingly small.
In other words, the model is required to map almost identical noisy states to substantially different clean signals along the average-brightness direction.
This scale mismatch makes denoising along the average-brightness direction challenging and amplifies approximation errors in the learned denoiser.
In contrast, in the proposed model, the term~$\psi_t z$ can preserve~$O(1)$ variability in the same direction as long as~$\psi_t$ remains bounded away from zero.
Consequently, near-terminal noisy states may remain distinguishable by their average brightness even in high dimensions, which may alleviate the difficulty of recoverying this component in the reverse process.

\section{Method for constructing $\gamma_t$ in the proposed model} \label{sec:gamma_t}
The coefficients~$\phi_t$ and~$\psi_t$ depend on both the variance schedule~$\beta_t$ (or equivalently~$\alpha_t$ and~$\bar{\alpha}_t$) and the sequence~$\gamma_t$, as shown in \eqref{eq:def_phi} and \eqref{eq:def_psi}. In this section, we treat~$\beta_t$ as given, for example by adopting a standard schedule used in diffusion models, and describe how to construct~$\gamma_t$ accordingly. For each admissible choice of~$\beta_t$, this construction induces the corresponding coefficients~$\phi_t$ and~$\psi_t$; it does not impose an additional restriction on the variance schedule itself.

\subsection{Noise-matching strategy}
In the loss function \eqref{eq:loss_eps_simple} of standard diffusion models, the noise added to~$\bm{x}_0$ and the target noise predicted by~$\epsilon_\theta$ are identical.
In contrast, in the proposed loss \eqref{eq:loss_proposed}, the noise added to~$\bm{x}_0$ is~$\sigma_0 \bm{\epsilon}_0 + \psi_t \bm{\xi}$, whereas the target noise is~$\sigma_0 \bm{\epsilon}_0 + \phi_t \bm{\xi}$.
To preserve the structure of the original loss, it is natural to impose the condition~$\psi_t = \phi_t$, so that the prediction target matches the injected noise, as in standard diffusion training. We refer to this choice of~$\gamma_t$ as the noise-matching strategy. The construction procedure is described below.

Fix a schedule~$\{\beta_t\}_{t=1}^T$ with~$0 < \beta_t < 1$, and hence~$0 < \alpha_t < 1$. Imposing~$\phi_t = \psi_t$ for~$t = 2, \ldots, T$ and substituting \eqref{eq:def_phi} and \eqref{eq:def_psi} yields
\begin{align*}
  \frac{\sqrt{\alpha_t} \sqrt{1 - \bar{\alpha}_t}}{1-\alpha_t} \gamma_t = \frac{1}{\sqrt{1 - \bar{\alpha}_t}} \sum_{i=1}^t \sqrt{\frac{\bar{\alpha}_t}{\bar{\alpha}_{i-1}}} \gamma_i.
\end{align*}
Rearranging this equation gives the following recursion for~$\gamma_t$:
\begin{align}
  \gamma_t = \frac{(1 - \alpha_t) \sqrt{\bar{\alpha}_{t-1}}}{\alpha_t (1-\bar{\alpha}_{t-1})} \sum_{i=1}^{t-1} \frac{\gamma_i}{\sqrt{\bar{\alpha}_{i-1}}}. \label{eq:phi_eq_psi}
\end{align}
Moreover, from Section~\ref{sec:L1}, we have
\begin{align*}
  \phi_1 = \psi_1 = \frac{\sqrt{\alpha_1}}{\sqrt{1-\alpha_1}} \gamma_1.
\end{align*}
Therefore, for any fixed schedule~$\{\beta_t\}_{t=1}^T$, defining~$\gamma_t \ (t=2, \ldots, T)$ recursively by \eqref{eq:phi_eq_psi} ensures that~$\phi_t = \psi_t \ (t=1, \ldots, T)$, independently of the choice of~$\gamma_1$, due to the linearity of the recursion. In this sense, the noise-matching strategy maps a given~$\beta_t$ schedule to the induced coefficients~$\gamma_t$,~$\phi_t$, and~$\psi_t$.

In the noise-matching strategy,~$\gamma_1$ is chosen so that the condition~$\psi_T = 1$ in Proposition~\ref{prop:fwd} is satisfied. Notably, the recursion \eqref{eq:phi_eq_psi} admits a scaling property: if~$\gamma_1$ is scaled by a positive constant~$C (>0)$, then the resulting sequences~$\gamma_t \ (t \geq 2)$, as well as~$\phi_t$ and~$\psi_t \ (t \geq 1)$, are all scaled by~$C$.
Based on this property, we first set~$\hat{\gamma}_1 = 1$ and compute~$\hat{\gamma}_t \ (t \geq 2)$ recursively using \eqref{eq:phi_eq_psi}. We then compute~$\hat{\psi}_T$ from \eqref{eq:def_psi} and define
\begin{align*}
  \gamma_t = \frac{\hat{\gamma}_t}{\hat{\psi}_T}.
\end{align*}
This normalization ensures that~$\psi_T = 1$.

The noise-matching strategy is summarized in Algorithm~\ref{alg:noise_matching_strategy}.
\begin{algorithm}
  \caption{Noise-matching strategy for constructing $\gamma_t$}
  \label{alg:noise_matching_strategy}
  \begin{algorithmic}[1]
    \STATE{$\hat{\gamma}_1 \gets 1$}
    \FOR{$t = 2$ to $T$}
    \STATE{$\hat{\gamma}_t \gets \frac{(1 - \alpha_t) \sqrt{\bar{\alpha}_{t-1}}}{\alpha_t (1-\bar{\alpha}_{t-1})} \sum_{i=1}^{t-1} \frac{\hat{\gamma}_i}{\sqrt{\bar{\alpha}_{i-1}}}$}
    \ENDFOR
    \STATE{$\hat{\psi}_T \gets \frac{1}{\sqrt{1 - \bar{\alpha}_T}} \sum_{i=1}^T \sqrt{\frac{\bar{\alpha}_T}{\bar{\alpha}_{i-1}}} \hat{\gamma}_i$}
    \FOR{$t = 1$ to $T$}
    \STATE{Normalize $\gamma_t \gets \hat{\gamma}_t / \hat{\psi}_T$}
    \ENDFOR
    \RETURN $\{\gamma_t\}_{t=1}^T$
  \end{algorithmic}
\end{algorithm}

\subsection{The conditional mean under the noise-matching strategy}
Under the noise-matching strategy for~$\gamma_t$, the following result holds:
\begin{proposition} \label{prop:nu}
  Suppose that~$\gamma_t$ is determined using the noise-matching strategy and~$\sigma_0 = 1$. Then, the conditional distribution~$q(\bm{x}_{t-1} | \bm{x}_{t}, \bm{x}_0, \bm{\xi})$ in \eqref{eq:q_x_t-1_x_0_xi} coincides with~$q(\bm{x}_{t-1} | \bm{x}_{t}, \bm{x}_0)$ in standard diffusion models~\cite{ho2020denoising}.
\end{proposition}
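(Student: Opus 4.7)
The plan is to compare the two Gaussian conditionals term by term. Both $q(\bm{x}_{t-1}\mid \bm{x}_t, \bm{x}_0, \bm{\xi})$ in \cref{eq:q_x_t-1_x_0_xi} and the standard DDPM posterior $q(\bm{x}_{t-1}\mid \bm{x}_t, \bm{x}_0)$ are normal with isotropic covariances, so it suffices to verify that (i)~$\tilde{\beta}_t$ reduces to the standard value $(1-\alpha_t)(1-\bar{\alpha}_{t-1})/(1-\bar{\alpha}_t)$, and (ii)~$\tilde{\mu}_t(\bm{x}_t,\bm{x}_0,\bm{\xi})$ loses its $\bm{\xi}$-dependence. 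Part~(i) is immediate from \cref{eq:tilde_beta_t} after setting $\sigma_0=1$. For part~(ii), the first two terms of $\tilde{\mu}_t$ in \cref{eq:tilde_mu_x0} already coincide with the standard DDPM mean, so the whole task is to prove that the coefficient $\nu_t$ in \cref{eq:nu_t} vanishes for $t=2,\ldots,T$ under the balanced-$\phi_t,\psi_t$ strategy.

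The core step will be to re-express the $\gamma_t$ produced by the balanced recurrence \cref{eq:phi_eq_psi} directly in terms of $\psi_{t-1}$, so that the numerator of $\nu_t$ collapses. I would first factor $\sqrt{\bar{\alpha}_{t-1}}$ out of the sum in \cref{eq:def_psi} to obtain
\begin{equation*}
\sum_{i=1}^{t-1} \frac{\gamma_i}{\sqrt{\bar{\alpha}_{i-1}}} \;=\; \frac{\sqrt{1-\bar{\alpha}_{t-1}}}{\sqrt{\bar{\alpha}_{t-1}}}\,\psi_{t-1}.
\end{equation*}
Substituting this into the balanced recurrence yields the compact identity
\begin{equation*}
\gamma_t \;=\; \frac{(1-\alpha_t)\,\sqrt{1-\bar{\alpha}_{t-1}}}{\alpha_t\,(1-\bar{\alpha}_{t-1})}\,\psi_{t-1}.
\end{equation*}

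Plugging this identity into the numerator of $\nu_t$ in \cref{eq:nu_t} causes the two summands to cancel exactly, so $\nu_t=0$ for every $t=2,\ldots,T$. Combined with $\sigma_0=1$, this proves equality of the means and variances and hence of the full conditional distributions. I do not anticipate a genuine obstacle, as the argument is purely algebraic and short. The only delicacy is checking that the index ranges line up — the balanced recurrence is invoked for $t\geq 2$, $\psi_{t-1}$ is then evaluated at $t-1\geq 1$, and $\nu_t$ itself is only required for those same indices — which one can verify by inspection without invoking $\psi_0$.
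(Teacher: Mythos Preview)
Your argument is correct. Both you and the paper reduce the proposition to showing $\nu_t = 0$ for $t \ge 2$ (together with the trivial variance check once $\sigma_0 = 1$), but you reach that conclusion by a different route. The paper invokes the identity $\phi_t = \psi_t - \frac{\sqrt{1-\bar{\alpha}_t}\,\sqrt{\alpha_t}}{1-\alpha_t}\,\nu_t$, established in the proof of \cref{lemma:mu}, and then observes that the balanced-$\phi_t,\psi_t$ strategy forces $\phi_t = \psi_t$, so $\nu_t = 0$ immediately. You instead work directly from the balanced recurrence \cref{eq:phi_eq_psi}: rewriting $\sum_{i=1}^{t-1}\gamma_i/\sqrt{\bar{\alpha}_{i-1}}$ via the definition of $\psi_{t-1}$ gives $\gamma_t = \frac{(1-\alpha_t)}{\alpha_t\sqrt{1-\bar{\alpha}_{t-1}}}\,\psi_{t-1}$, and plugging this into \cref{eq:nu_t} makes the numerator cancel. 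Your approach is self-contained and does not depend on the intermediate $\phi_t$--$\nu_t$ relation, which is a mild advantage if one wants to read the proposition in isolation; the paper's approach is slightly slicker because it reuses work already done for \cref{lemma:mu}. Either way the algebra is the same length and the index bookkeeping you flag ($t\ge 2$, $\psi_{t-1}$ with $t-1\ge 1$) is handled correctly.
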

\begin{proof}
  From Appendix~\ref{apdx:proof_lemma_mu}, we have
  \[
    \phi_t = \psi_t - \frac{\sqrt{1 - \bar{\alpha}_t} \sqrt{\alpha_t}}{1 - \alpha_t} \nu_t \quad (t=2, \ldots, T).
  \]
  Under the noise-matching strategy,~$\phi_t = \psi_t$, which implies~$\nu_t = 0$. Substituting this into \eqref{eq:tilde_mu_x0},~$\tilde{\mu}_t(\bm{x}_t, \bm{x}_0, \bm{\xi})$ becomes independent of~$\bm{\xi}$. Therefore, the conditional distribution reduces to~$q(\bm{x}_{t-1} | \bm{x}_{t}, \bm{x}_0)$ when~$\sigma_0 = 1$, completing the proof.
\end{proof}

\subsection{Example calculation of the gamma coefficients}
We present a concrete example of computing~$\gamma_t$,~$\psi_t$, and~$\phi_t$ using the noise-matching strategy. As an illustration, we use the~$\beta_t$ schedule from Stable Diffusion 1.5~\cite{rombach2022high} with~$T = 1000$.
Figure~\ref{fig:plot_beta_gamma} shows the resulting~$\gamma_t$, together with the corresponding~$\phi_t$ and~$\psi_t$. The scale of~$\gamma_t$ is comparable to that of~$\beta_t$, but increases more rapidly at larger time steps. In addition,~$\phi_t$ and~$\psi_t$ coincide for all~$t$ and converge to~$1$ as~$t \to T$.
\begin{figure}[tbhp]
  \centering
  \includegraphics[width=1.0\linewidth]{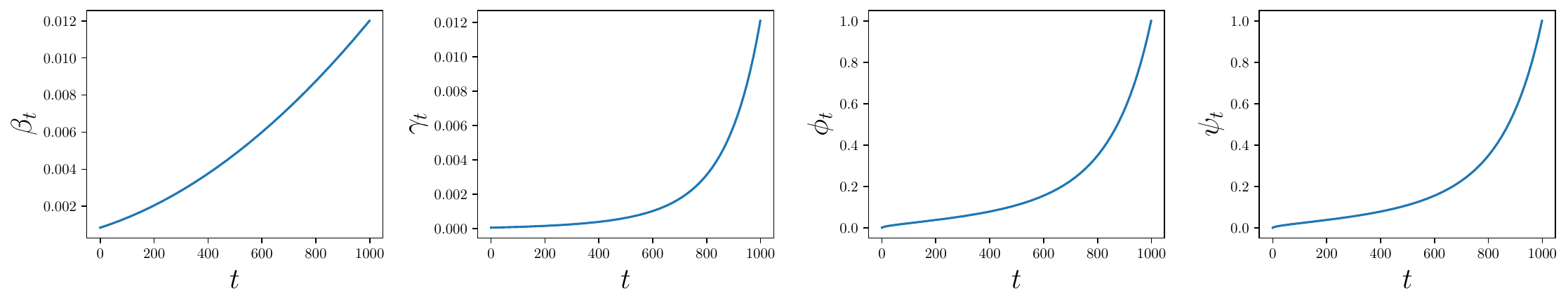}
  \caption{From left to right: $\beta_t$ from Stable Diffusion 1.5, and the corresponding $\gamma_t$, $\phi_t$, and $\psi_t$ computed using the noise-matching strategy.}
  \label{fig:plot_beta_gamma}
\end{figure}

As shown in Figure~\ref{fig:plot_beta_gamma}, both~$\phi_t$ and~$\psi_t$ increase with time~$t$. In the loss function~\eqref{eq:loss_proposed}, this implies that the contribution of the additional noise~$\bm{\xi}$ becomes larger at later time steps. Consequently, when~$\bm{x}_t$ is close to~$\bm{x}_0$, the coefficient applied to~$\bm{\xi}$ is small, preventing the additional noise from perturbing the data excessively in low-noise regimes. In contrast, at later time steps where $\bm{x}_t$ is dominated by noise, the influence of~$\bm{\xi}$ becomes more significant, making the effect of~$\bm{\xi}$ more prominent in high-noise regimes.
This behavior arises naturally from the condition~$\phi_t = \psi_t$ imposed by the noise-matching strategy.

\section{Extension to velocity prediction modeling} \label{sec:vpred}
The proposed model is grounded in a well-defined probabilistic framework, enabling principled integration with other diffusion modeling techniques, whereas such integrations are less straightforward in the offset noise model.
As a concrete example, we extend the proposed model to $v$-prediction~\cite{salimans2022progressive}, which is widely used in modern diffusion models, including recent text-to-image systems such as Stable Diffusion~2~\cite{rombach2022high,SD2}.
In this formulation,~$\mu_\theta$ is reparameterized using~$v_\theta$ (velocity) instead of~$\epsilon_\theta$.
Compared to $\epsilon$-prediction, $v$-prediction remains well-defined even when~$\alpha_t$ approaches zero, a regime where $\epsilon$-prediction becomes ill-conditioned due to \eqref{eq:mu_epsilon}.
This property has been exploited in~\cite{lin2024common} to address limitations of $\epsilon$-prediction in diffusion models.

\subsection{Training loss function in $v$-prediction modeling}
The following proposition defines the training loss function for the proposed model under $v$-prediction.
\begin{proposition}[Training loss function for $v$-prediction]\label{prop:loss_vpred}
  Suppose the forward and reverse processes are defined as in \eqref{eq:fp_proposed1}--\eqref{eq:rev_prop4}, and that~$\gamma_t \ (t=1,\ldots,T)$ is determined by the noise-matching strategy. Then, the objective that maximizes the ELBO in~\eqref{eq:elbo_prop} under $v$-prediction is
  \begin{align*}
    \ell^v(\theta;\bm{x}_0) = \mathbb{E}_{q(\bm{\xi}), \mathcal{U}(t | 1,T), \mathcal{N}(\bm{\epsilon}_0 | 0, I)}  \left[\lambda_t^v \left\|\sqrt{\bar{\alpha}_t}\left(\sigma_0 \bm{\epsilon}_0 + \psi_t \bm{\xi} \right) - \sqrt{1 - \bar{\alpha}_t} \bm{x}_0 - v_\theta(\bm{x}_t, t)\right\|^2 \right],
  \end{align*}
  where
  \begin{align*}
    \lambda_t^v = \frac{\bar{\alpha}_{t-1} (1-\alpha_t)^2}{2 \sigma_t^2 (1-\bar{\alpha}_t)},
  \end{align*}
  and~$\psi_t$ and~$\bm{x}_t$ are defined in \eqref{eq:def_psi} and \eqref{eq:xt_x0_ep0_f}, respectively.
\end{proposition}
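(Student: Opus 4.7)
The plan is to reduce \cref{prop:loss_vpred} directly to \cref{thm:loss_proposed} by a linear change of the learning target from $\epsilon_\theta$ to $v_\theta$. The key simplifying consequence of the balanced-$\phi_t,\psi_t$ strategy is $\phi_t=\psi_t$, which makes the noise injected into $\bm{x}_t$ and the target noise inside the norm of \cref{eq:loss_proposed} coincide. Writing $\tilde{\bm{\epsilon}}_t:=\sigma_0\bm{\epsilon}_0+\psi_t\bm{\xi}$, \cref{eq:xt_x0_ep0_f} becomes $\bm{x}_t=\sqrt{\bar{\alpha}_t}\bm{x}_0+\sqrt{1-\bar{\alpha}_t}\,\tilde{\bm{\epsilon}}_t$, while the residual in \cref{eq:loss_proposed} is just $\tilde{\bm{\epsilon}}_t-\epsilon_\theta(\bm{x}_t,t)$. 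So, despite the extra noise $\bm{\xi}$, the $\bm{x}_t$--noise relationship has exactly the algebraic form used by the standard $v$-prediction construction of~\cite{salimans2022progressive}, and the familiar reparameterization carries over with $\tilde{\bm{\epsilon}}_t$ playing the role of the usual Gaussian noise.

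Next I would define the generalized velocity
\[
v\;:=\;\sqrt{\bar{\alpha}_t}\,\tilde{\bm{\epsilon}}_t-\sqrt{1-\bar{\alpha}_t}\,\bm{x}_0\;=\;\sqrt{\bar{\alpha}_t}\bigl(\sigma_0\bm{\epsilon}_0+\psi_t\bm{\xi}\bigr)-\sqrt{1-\bar{\alpha}_t}\,\bm{x}_0,
\]
which is precisely the target quantity inside the squared norm of $\ell^v$. The $2{\times}2$ system relating $(\bm{x}_0,\tilde{\bm{\epsilon}}_t)$ to $(\bm{x}_t,v)$ has determinant $-1$, so inverting it yields $\tilde{\bm{\epsilon}}_t=\sqrt{\bar{\alpha}_t}\,v+\sqrt{1-\bar{\alpha}_t}\,\bm{x}_t$. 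I then introduce $v_\theta(\bm{x}_t,t)$ through the analogous reparameterization $\epsilon_\theta(\bm{x}_t,t)=\sqrt{\bar{\alpha}_t}\,v_\theta(\bm{x}_t,t)+\sqrt{1-\bar{\alpha}_t}\,\bm{x}_t$ (which, when substituted into \cref{eq:mu_epsilon}, gives an explicit $\mu_\theta$ in terms of $v_\theta$ that remains well-defined even when $\alpha_t\to 0$). Subtracting the two identities,
\[
\sigma_0\bm{\epsilon}_0+\psi_t\bm{\xi}-\epsilon_\theta(\bm{x}_t,t)=\sqrt{\bar{\alpha}_t}\,\bigl(v-v_\theta(\bm{x}_t,t)\bigr),
\]
so the per-sample squared residual in \cref{eq:loss_proposed} equals $\bar{\alpha}_t\,\|v-v_\theta(\bm{x}_t,t)\|^2$.

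The last step is to check that the weight matches the claimed $\lambda_t^v$. Since the paper sets $\alpha_0=1$ and $\bar{\alpha}_t=\prod_{i=0}^t\alpha_i$, we have $\bar{\alpha}_t=\alpha_t\bar{\alpha}_{t-1}$ for $t\geq 1$, so by \cref{eq:def_lambda}
\[
\lambda_t\,\bar{\alpha}_t\;=\;\frac{\alpha_t\bar{\alpha}_{t-1}(1-\alpha_t)^2}{2\sigma_t^2\alpha_t(1-\bar{\alpha}_t)}\;=\;\frac{\bar{\alpha}_{t-1}(1-\alpha_t)^2}{2\sigma_t^2(1-\bar{\alpha}_t)}\;=\;\lambda_t^v,
\]
and absorbing the $\bar{\alpha}_t$ into $\lambda_t$ converts the right-hand side of \cref{eq:loss_proposed} into the expression claimed for $\ell^v$. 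I do not anticipate any technical difficulty: the derivation is essentially a deterministic change of variables in the learning target. The only conceptual point that genuinely requires the balanced-$\phi_t,\psi_t$ hypothesis is the very first step, where the target noise and the injected noise must coincide so that a single $v$ can be defined; without $\phi_t=\psi_t$ one would obtain an ``asymmetric'' velocity whose squared-norm loss would not collapse into the clean form stated in \cref{prop:loss_vpred}, which is exactly why the hypothesis singles out that choice of $\gamma_t$.
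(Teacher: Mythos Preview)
Your argument is correct, but it follows a different route from the paper's. The paper works directly at the level of $\mu_\theta$ and the ELBO: it expresses $\bm{x}_0=\sqrt{\bar{\alpha}_t}\bm{x}_t-\sqrt{1-\bar{\alpha}_t}\,\bm{v}_t$, substitutes this into $\tilde{\mu}_t(\bm{x}_t,\bm{x}_0,\bm{\xi})$ from \cref{eq:tilde_mu_x0}, uses the balanced-$\phi_t,\psi_t$ strategy via the consequence $\nu_t=0$ to drop the $\bm{\xi}$-term, and then introduces a matching parametrization of $\mu_\theta$ in terms of $v_\theta$; the coefficient $\lambda_t^v$ falls out of $\frac{1}{2\sigma_t^2}\|\tilde{\mu}_t-\mu_\theta\|^2$, and the $t=1$ case ($\mathcal{L}_1$) is handled by a separate computation. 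Your approach is more economical: you treat \cref{thm:loss_proposed} as a black box, invoke $\phi_t=\psi_t$ so that the injected and target noise coincide as a single $\tilde{\bm\epsilon}_t$, and then perform a purely algebraic change of learning target $\epsilon_\theta\mapsto v_\theta$ that converts $\lambda_t$ into $\lambda_t\bar{\alpha}_t=\lambda_t^v$ in one stroke, with no need to revisit $\tilde{\mu}_t$ or split off $\mathcal{L}_1$. The paper's route makes the $\mu_\theta$-parametrization in terms of $v_\theta$ explicit and shows transparently why the $\alpha_t\to 0$ singularity of the $\epsilon$-parametrization disappears; your route is shorter and highlights that the $v$-prediction loss is literally a reweighted $\epsilon$-prediction loss once the balanced strategy is in force. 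One small slip: the $2\times 2$ matrix sending $(\bm{x}_0,\tilde{\bm\epsilon}_t)$ to $(\bm{x}_t,v)$ is a rotation with determinant $+1$, not $-1$; this does not affect anything downstream.
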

\begin{proof}
  See Appendix~\ref{apdx:proof_prop_vpred}.
\end{proof}

\section{Related work}

This section situates the proposed model relative to prior studies on brightness-related failures of diffusion models and to broader approaches that relax the standard Gaussian terminal distribution.

\paragraph{Heuristic modifications to diffusion training}

Offset noise \cite{offset_noise} was introduced as an empirical technique for mitigating the difficulty that diffusion models have in generating images with extreme brightness levels. By adding an additional noise component correlated across channels, as described in Section~\ref{sec:offsetnoise}, offset noise has been shown empirically to improve the generation of low- and high-brightness images and has been adopted in practical systems \cite{podell2024sdxl}. A multi-scale extension of this idea, called pyramid noise, was proposed in \cite{pyramid_noise}.
Despite their empirical effectiveness, these methods directly modify the training objective without specifying corresponding forward and reverse processes. As a result, it remains unclear whether they are fully consistent with the likelihood-based formulation of diffusion models. In particular, the connection between these modified objectives and the underlying probabilistic framework is not made explicit, which limits their theoretical interpretability and their integration with other model variants.

\paragraph{Modifications of diffusion dynamics}

Another line of work addresses brightness-related issues by modifying the dynamics of the diffusion process. Lin et al. \cite{lin2024common} analyzed commonly used noise schedules and proposed adjusting the schedule so that the signal-to-noise ratio (SNR) approaches zero at the final time step. Although this approach improves the representation of low-frequency components, it introduces constraints under which the standard $\epsilon$-prediction formulation becomes inapplicable, thereby requiring alternative parameterizations such as $v$-prediction.
Hu et al. \cite{hu2024one} proposed a method that corrects the initial noise in the reverse process using an auxiliary model. Their approach can be applied to pre-trained diffusion models and improves the generation of low-frequency structures. However, it requires training an additional model and does not alter the underlying distributional assumptions of the diffusion process.
These approaches modify the forward or reverse dynamics to improve specific properties of generated samples, but they retain the fundamental assumption that the terminal distribution of the diffusion process is a zero-mean Gaussian.

\paragraph{Generalizing terminal distributions}

Beyond modifications to standard diffusion models, several studies have explored frameworks that relax the assumption that data must be diffused into a standard Gaussian distribution. Schr\"{o}dinger bridge methods \cite{de2021diffusion,liu20232,chenlikelihood} formulate generative modeling as the problem of learning stochastic processes that connect two arbitrary distributions. Similarly, flow-matching-based approaches \cite{lipmanflow,liuflow,tong2024improving} learn deterministic or stochastic flows between distributions without requiring the terminal distribution to be a standard Gaussian.
These approaches provide flexible frameworks for modeling transformations between distributions. In contrast, our method extends the discrete-time diffusion framework by allowing the terminal distribution to be Gaussian with an arbitrary mean structure while preserving the probabilistic formulation and variational training objective of standard diffusion models.

\section{Experiments} \label{sec:experiments}
In this section, we compare the proposed model with existing methods, focusing on the difficulty diffusion models have in generating images with extreme brightness levels.
Prior studies~\cite{offset_noise,lin2024common,hu2024one} have examined this issue in text-conditioned image generation by testing whether models can generate images such as truly black images from prompts like "Solid black background."
However, these evaluations were qualitative and focused on a narrow subset of the learned distribution, rather than providing a quantitative assessment of overall distribution modeling performance.

To the best of our knowledge, no benchmark image dataset currently provides both extreme brightness levels and a controlled underlying distribution.
To address this gap, we constructed synthetic data whose brightness distribution is uniform and used it to quantitatively evaluate the proposed method.
The experiments show that, especially in high-dimensional settings, existing diffusion models generate data with a non-uniform brightness distribution even when trained on data whose true brightness distribution is uniform.
In particular, samples with low or high brightness levels tend to be underrepresented.
These results indicate that the synthetic dataset used in this study exposes a concrete failure mode of conventional diffusion models.

We first describe the synthetic dataset and its statistical properties, and then present the experimental setup and results.

\subsection{Dataset} \label{sec:cylinder}
The synthetic dataset used in the experiments is referred to as the Cylinder dataset. It consists of data points~$\bm{x}_0 \in \mathbb{R}^n$ distributed in a cylindrical region of an $n$-dimensional space.
The centers of the top and bottom faces of the cylinder are defined as~$\bm{x}_\text{top} := k \mathbf{1}_n$ and~$\bm{x}_\text{bottom} := -\bm{x}_\text{top}$, respectively, where~$k \ (>0)$ is a scalar and~$\mathbf{1}_n$ is the~$n$-dimensional all-ones vector.
The radius of the cylinder is defined as~$r \|\mathbf{1}_n\| \ (r > 0)$.
Each data point~$\bm{x}_0$ is generated as
\begin{align}
  \bm{x}_0 = u_h \bm{x}_\text{top} + u_r \bm{x}_\text{ortho}, \label{eq:cylinder_x0}
\end{align}
where~$u_h$ and~$u_r$ are scalar random variables distributed as~$u_h \sim \mathcal{U}_c(-1, 1)$ and~$u_r \sim \mathcal{U}_c(0, r)$, respectively. Here,~$\mathcal{U}_c(a, b)$ denotes the uniform distribution over~$[a,b]$.
The vector~$\bm{x}_\text{ortho}$ is a random unit vector in the subspace~$\mathbf{1}_n^\perp$, which is orthogonal to~$\mathbf{1}_n$.
For reference, the Python code used to generate the Cylinder dataset is provided in Appendix~\ref{apdx:cylinder_code}.

\subsubsection{Brightness distribution of the Cylinder dataset} \label{sec:a0_dist}
Consider a grayscale image~$\bm{x}^\text{im}$ with~$n$ pixels. For convenience, we assume that each element of~$\bm{x}^\text{im}$ is normalized to lie in the range~$[-k, k]$.
Each element of~$\bm{x}^\text{im}$ represents the brightness of a pixel. The average brightness of~$\bm{x}^\text{im}$ is given by $B_n(\bm{x}^\text{im})$.
The image with the lowest average brightness is the one whose entries are all~$-k$ (a completely black image), whereas the image with the highest average brightness is the one whose entries are all~$k$ (a completely white image).

If the data points~$\bm{x}_0$ in the Cylinder dataset are interpreted as pseudo-grayscale images,\footnote{Strictly speaking,~$\bm{x}_0$ is not a true grayscale image because it does not necessarily lie in~$[-k, k]^n$.} then~$\bm{x}_\text{bottom}$ and~$\bm{x}_\text{top}$ correspond to a completely black image and a completely white image, respectively.
From \eqref{eq:cylinder_x0},~$\bm{x}_0$ can be viewed as the sum of two images,~$u_h \bm{x}_\text{top}$ and~$u_r \bm{x}_\text{ortho}$, whose average brightness values are
\begin{align*}
  B_n(u_h \bm{x}_\text{top})   & = \frac{1}{n} u_h \bm{x}_\text{top} \cdot \mathbf{1}_n = \frac{u_h k \mathbf{1}_n \cdot \mathbf{1}_n}{n} = u_h k \sim \mathcal{U}_c(-k, k), \\
  B_n(u_r \bm{x}_\text{ortho}) & = \frac{1}{n} u_r \bm{x}_\text{ortho} \cdot \mathbf{1}_n = 0,
\end{align*}
where we used the fact that~$\bm{x}_\text{ortho} \in \mathbf{1}_n^\perp$ implies~$\bm{x}_\text{ortho} \cdot \mathbf{1}_n = 0$.
Therefore, the average brightness of~$\bm{x}_0$ is
\begin{align}
  B_n(\bm{x}_0)
  &= \frac{1}{n} \left( u_h \bm{x}_\text{top} + u_r \bm{x}_\text{ortho} \right) \cdot \mathbf{1}_n \notag \\
  &= B_n(u_h \bm{x}_\text{top}) + B_n(u_r \bm{x}_\text{ortho}) \notag \\
  &= u_h k \sim \mathcal{U}_c(-k, k). \label{eq:avg_brightness}
\end{align}
Hence, if~$\bm{x}_0$ in the Cylinder dataset is interpreted as a pseudo-grayscale image, its average brightness is uniformly distributed over~$[-k, k]$.

\subsubsection{Experimental setup for the Cylinder dataset}
We varied the dimensionality as~$n=2, 10, 50, 100, 200$. For each value of~$n$, we generated training and test Cylinder datasets containing~$5000$ samples each by following the procedure described in Section~\ref{sec:cylinder}.
The parameters~$k$ and~$r$ were set to~$k=2$ and~$r=0.5$, respectively. These values were chosen so that the standard deviation of each component in the generated Cylinder dataset was close to~$1$.\footnote{The actual standard deviation of each component in the generated Cylinder dataset was approximately~$1.2$, independent of~$n$. In addition, by symmetry around the origin, the mean of each component was~$0$.}
An example of the Cylinder dataset with~$n=2$ is shown in the rightmost column of Figure~\ref{fig:diffusion_2d}.

\subsection{Compared models} \label{sec:model_compare}
We compared the following models:
\begin{itemize}
  \item \emph{Base model:} This model uses the training loss function~$\hat{\ell}_\text{simple}$ in~\eqref{eq:loss_eps_simple}, corresponding to the DDPM objective~\cite{ho2020denoising}.
  \item \emph{Offset noise model:} This model adopts the loss function~$\hat{\ell}_\text{offset}$ in \eqref{eq:loss_offsetnoise}. Since~$\bm{x}_0$ in the Cylinder dataset represents grayscale images (single-channel), we define~$q(\bm{\epsilon}_c) = \mathcal{N}(\bm{\epsilon}_c \ | \ 0, \sigma_c^2 \mathbf{1}_{n \times n})$.
  \item \emph{Zero-SNR model:} This model modifies~$\beta_t$ in the Base model using the method proposed in~\cite{lin2024common}.
  \item \emph{Proposed model:} This model uses the training loss function~$\ell_\text{simple}$ defined in~\eqref{eq:ell_simple}, where~$\gamma_t$ is determined by the noise-matching strategy and $\sigma_0=1$. In the proposed model,~$q(\bm{\xi})$ is set to be identical to~$q(\bm{\epsilon}_c)$ in the Offset noise model. Thus, in our experiments, the only difference between the proposed model and the offset noise model was the presence of the two time-dependent coefficients~$\phi_t$ and~$\psi_t$.
\end{itemize}
In addition, for each of the above models, we considered a version based on $v$-prediction \cite{salimans2022progressive}.
Although, as discussed in Section~\ref{sec:vpred}, there is no theoretical guarantee that offset noise remains valid under $v$-prediction, it can still be implemented in practice by replacing~$\bm{\epsilon}_0$ in the loss function with~$\bm{\epsilon}_0 + \bm{\epsilon}_c$, analogously to the $\epsilon$-prediction case. For the Zero-SNR model, only the $v$-prediction version was used because its formulation does not permit $\epsilon$-prediction.

For the Offset noise model, the hyperparameter~$\sigma_c^2$ was varied over 0.01, 0.05, 0.1, 0.5, and 1.0, and training and evaluation were conducted for each setting. Similarly, for the proposed model,~$\sigma_c^2$ was varied over 0.1, 0.5, and 1.0.

\subsection{Training and sampling settings}

\paragraph{Settings for the prediction target and noise schedule}
For~$\epsilon_\theta$ (or~$v_\theta$ in the $v$-prediction setting), we used a multilayer perceptron (MLP) with the time step~$t$ included as an additional input. The MLP had five hidden layers with GELU activations~\cite{gelu} and widths 256, 512, 1024, 512, and 256.
The maximum diffusion time was set to~$T = 200$, and~$\beta_t$ was determined using a log-linear schedule~\cite{permenterinterpreting}.\footnote{We used the \texttt{TimeInputMLP} and \texttt{ScheduleLogLinear} modules available at \url{https://github.com/yuanchenyang/smalldiffusion} for the MLP and beta schedule, respectively. In \texttt{ScheduleLogLinear}, we set \texttt{sigma\_min} to 0.01 and \texttt{sigma\_max} to 10.}

\paragraph{Optimizer settings}
We trained all models using the Adam optimizer \cite{adam} with learning rate~$0.001$. The mini-batch size was fixed at~$1024$, and training was run for~$200{,}000$ steps.
For some models, including the Base model, the loss occasionally diverged depending on the random seed. To mitigate this issue and stabilize training, we applied gradient clipping \cite{10.5555/3042817.3043083} with a maximum gradient norm of~$1$.

\paragraph{Settings for the reverse process}
When generating new data through the reverse process, we set the maximum time step to~$T = 200$. To prevent divergence, clipping was applied at each reverse step so that the samples remained within~$[-10, 10]^n$.\footnote{Such clipping is commonly used in image diffusion models. In this study, we chose the relatively large threshold~$10$, whereas the Cylinder dataset lies roughly in~$[-3, 3]^n$. This setting allows divergence to remain partially visible in the evaluation while avoiding numerical instability.}

\subsection{Evaluation metrics}
For each trained model, we generated~$5000$ samples through the reverse process and measured the distance between the generated distribution and the test-data distribution. We used two metrics: the 1-Wasserstein distance \cite{villani2008optimal} and the maximum mean discrepancy \cite{gretton2012kernel}, referred to below as 1WD and MMD, respectively.
For MMD, we used a Gaussian kernel with bandwidth~$\sqrt{n}$. We generated six train/test dataset pairs using different random seeds, and each model was trained and evaluated on all six pairs. Model initialization and other training factors were also randomized with the seed.

\subsection{Generation examples} \label{sec:generation_example}
Figure~\ref{fig:diffusion_2d} shows examples of data generated through the reverse process for~$n=2$. The top and bottom rows show the distributions at each time step for the Base model and the proposed model~($\sigma_c^2 = 1.0$), respectively. The rightmost column shows the test dataset.
For~$n=2$, both models produce samples at~$t=0$ whose distribution is close to that of the test data.
As described in Section~\ref{sec:model_compare}, the proposed model uses a terminal distribution whose mean is given by~$\bm{\xi} \sim \mathcal{N}\left(\bm{\xi} | 0, \sigma_c^2 \mathbf{1}_{n \times n} \right)$, whereas the Base model uses a zero-mean Gaussian at~$t=T$ ($T=200$ here). Consequently, at~$t=200$, the distribution of the proposed model is more spread along the diagonal directions than that of the Base model.
\begin{figure}[tbhp]
  \centering
  \includegraphics[width=1.0\linewidth,clip]{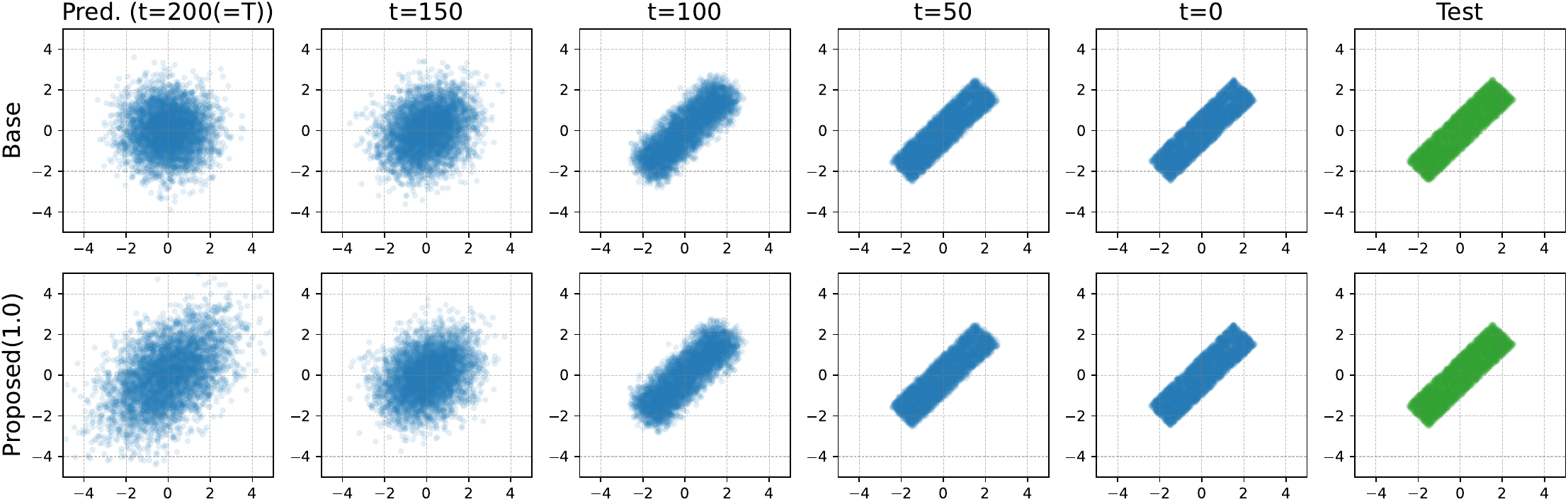}
  \caption{Distribution of generated data with~$n=2$ at each time step during the reverse process. The rightmost column represents the test data. The top row shows the results of the Base model, while the bottom row illustrates those of the Proposed model~($\sigma_c^2=1.0$).}
  \label{fig:diffusion_2d}
\end{figure}

\subsection{Evaluation results} \label{sec:eval_results}
\subsubsection{Comparison of average brightness distributions}
We compared the test dataset and the generated samples through the distribution of the average brightness~$B_n(\bm{x}_0)$.
As shown in \eqref{eq:avg_brightness}, the average brightness~$B_n(\bm{x}_0)$ in the Cylinder dataset follows the uniform distribution~$\mathcal{U}_c(-k, k)$, where~$k=2$ in our experiments.

The results are shown in Figure~\ref{fig:histgram}. For each~$n$, the top, middle, and bottom rows correspond to the Base model, the Offset noise model~($\sigma_c^2=0.1$), and the proposed model~($\sigma_c^2=1.0$), respectively. In each case, we use the model obtained after the final training step.
When~$n$ is small ($n \leq 10$), the distribution of~$B_n(\bm{x}_0)$ in the generated data closely matches that of the test dataset for all models.
As~$n$ increases, the~$B_n(\bm{x}_0)$ distribution generated by the Base and Offset noise models deviates from that of the test dataset. In particular, for the Base model with~$n=200$, samples near~$B_n(\bm{x}_0) \approx -2$ are underrepresented, highlighting the difficulty conventional diffusion models have in generating low-brightness images.
In contrast, the proposed model consistently produces samples whose~$B_n(\bm{x}_0)$ distribution remains close to that of the test dataset even as~$n$ increases.
This dimensional dependence is consistent with the theoretical analysis in Section~\ref{sec:brightness_theory}.
\begin{figure}[tbhp]
  \centering
  \includegraphics[width=1.0\linewidth,clip]{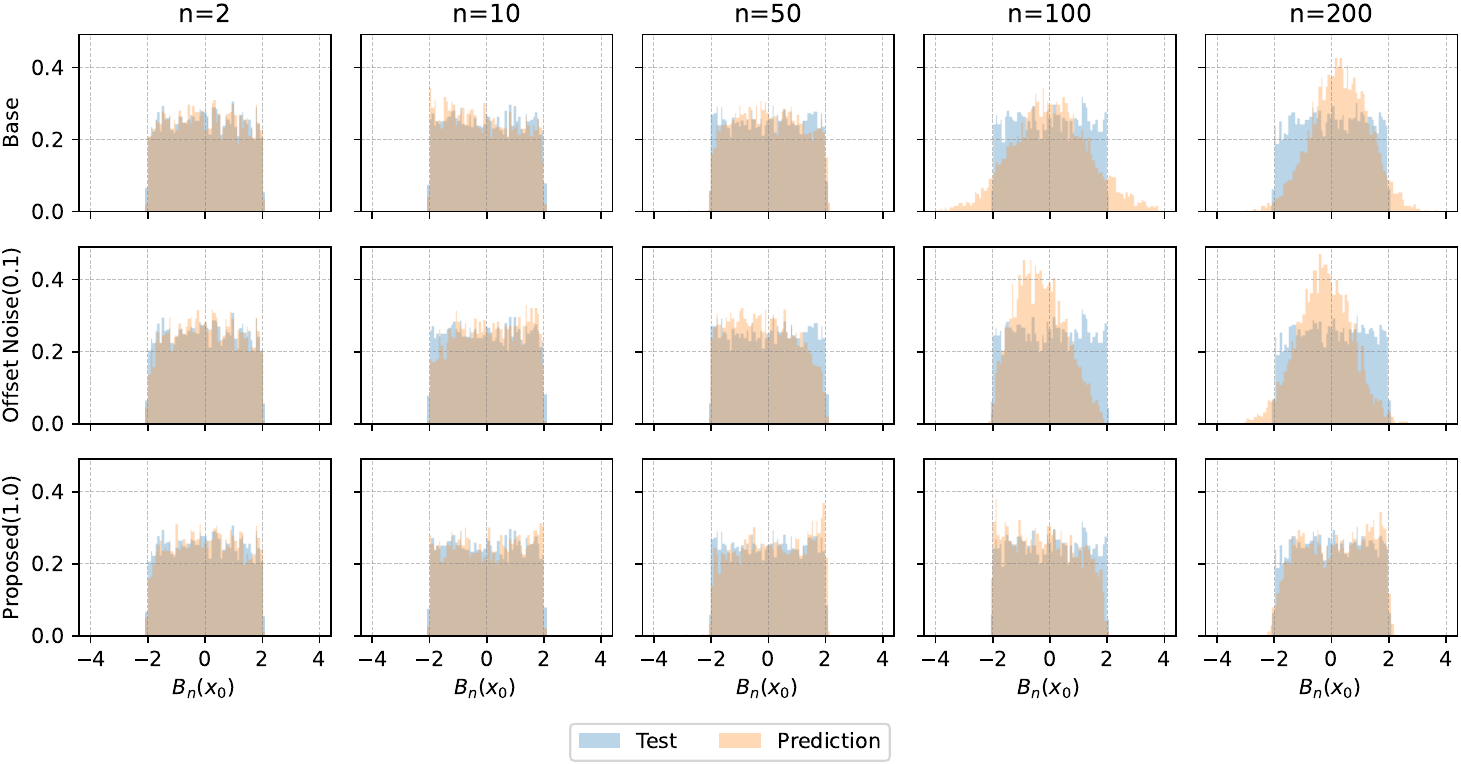}
  \caption{Comparison of distributions of average brightnesses~$B_n(\bm{x}_0)$ between the test data and the generated data.}
  \label{fig:histgram}
\end{figure}

\subsubsection{Comparison of quantitative metrics}
During training, every~$5000$ steps, we generated samples through the reverse process and measured their distance to the test dataset using 1WD and MMD. Figure~\ref{fig:1wd_mmd} reports the results for the $\epsilon$-prediction models. The curves show the median over six trials, and the error bars indicate the 10th to 90th percentiles.
For the Offset noise model, the results for~$\sigma_c^2=1.0$ were consistently worse than those for~$\sigma_c^2=0.5$, so the~$\sigma_c^2=1.0$ results are omitted for clarity.
\begin{figure}[tbhp]
  \centering
  \includegraphics[width=1.0\linewidth,clip]{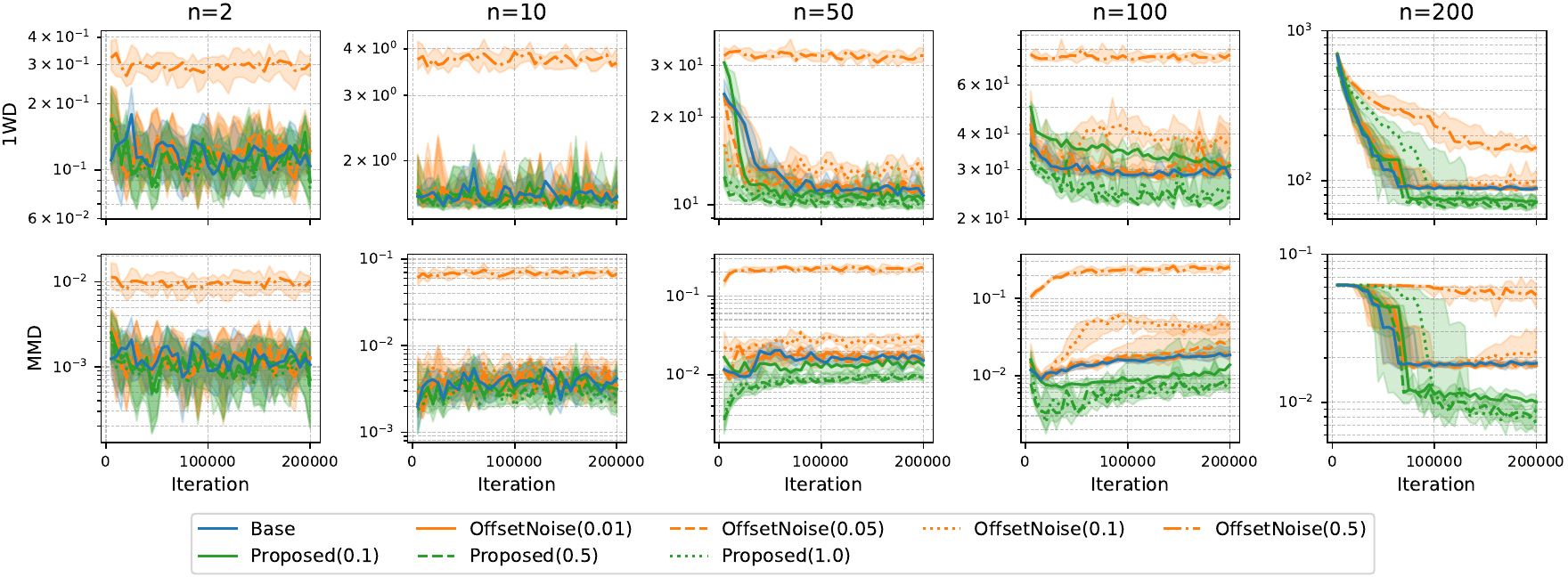}
  \caption{Evaluation results of 1WD (top row) and MMD (bottom row) during training.}
  \label{fig:1wd_mmd}
\end{figure}

Figure~\ref{fig:1wd_mmd} shows that for~$n \leq 10$, all models except the Offset noise model with~$\sigma_c^2=0.5$ achieve similar scores. As the dimensionality~$n$ increases, the proposed model outperforms the other methods by attaining smaller 1WD and MMD values.
These results suggest that the proposed model more accurately captures the distribution of the Cylinder dataset, especially in higher-dimensional settings.


\subsubsection{Training with data scaling} \label{sec:exp_scaling}
It is known that scaling the training data can affect the behavior of diffusion models~\cite{rombach2022high}. Instead of training directly on~$\bm{x}_0$, the diffusion model is trained on~$\bm{x}_0 / \rho$, where~$\rho\ (>0)$ is a scaling parameter.
After training, the final output is obtained by rescaling the generated data by~$\rho$.

The results for the Base model trained with data scaling on the Cylinder dataset are summarized in Appendix~\ref{apdx:exp_data_scaling}. For~$n=200$, data scaling does not substantially change the distribution of~$B_n(\bm{x}_0)$ in the generated samples. This suggests that data scaling alone does not resolve the difficulty of generating data with extreme average brightness.

\subsection{Evaluation results of $v$-prediction models}
Each model was also trained within the~$v$-prediction framework, and 1WD and MMD were evaluated every~$5000$ training steps. The results are shown in Figure~\ref{fig:1wd_mmd_vpred}.
\begin{figure}[tbhp]
  \begin{center}
    \includegraphics[width=1.0\linewidth,clip]{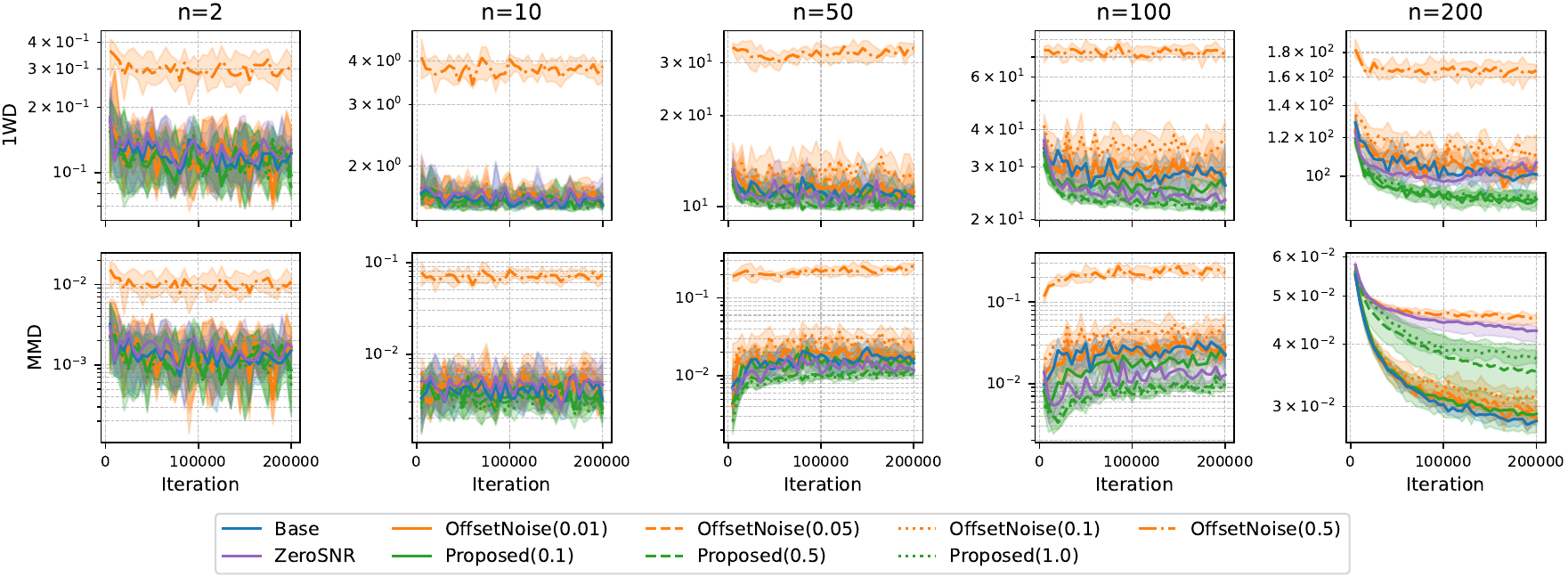}
    \caption{Evaluation results of 1WD (top) and MMD (bottom) during training within the~$v$-prediction framework.}
    \label{fig:1wd_mmd_vpred}
  \end{center}
\end{figure}

As in Figure~\ref{fig:1wd_mmd}, all models except the Offset noise model~($\sigma_c^2 = 0.5$) achieve comparable scores for~$n \leq 10$. As~$n$ increases, differences between the models become clearer. In particular, for~$n = 200$, the proposed model attains a lower 1WD than the other methods.
However, under MMD, the proposed model underperforms the Base model for~$n = 200$. A closer inspection revealed that, when sampling from the proposed model with~$n = 200$, a small number of points diverged during the reverse process and moved far from the test-data distribution. These outliers accounted for approximately~$10$ of the~$5000$ generated samples, or about~$0.2\%$ of the total. Because MMD is highly sensitive to outliers, these points likely degraded the MMD score.
In contrast, 1WD is less sensitive to such outliers. Therefore, the combination of higher MMD and lower 1WD in Figure~\ref{fig:1wd_mmd_vpred} suggests that, aside from a small number of divergent samples, the overall generated distribution is closer to the test distribution.

Appendix~\ref{apdx:brightness_vpred} compares the distributions of~$B_n(\bm{x}_0)$ for the test data and the samples generated by each $v$-prediction model. As in Figure~\ref{fig:histgram}, the distribution produced by the Base model departs further from the test distribution as~$n$ increases, whereas the proposed model remains closer to the test distribution even at~$n=200$.
These results suggest that the Base model still struggles to generate data with extreme brightness under $v$-prediction, whereas the proposed model substantially alleviates this difficulty.

\section{Conclusion and Future Work} \label{sec:conclusion}
We proposed a novel discrete-time diffusion model that introduces an additional random variable~$\bm{\xi} \sim q(\bm{\xi})$. We derived an ELBO for the proposed model and showed that the resulting loss function closely resembles the loss obtained by applying offset noise to conventional diffusion models.
This result provides a theoretical interpretation of offset noise, which has been empirically effective but has lacked a rigorous probabilistic foundation.
It also offers a broader perspective on offset noise and extends its applicability within a principled diffusion-modeling framework.

Several directions remain for future work. In this study, the distribution~$q(\bm{\xi})$ was predefined; an important extension would be to estimate~$q(\bm{\xi})$ in a data-driven manner. In addition, this paper considered the setting in which~$\bm{x}_0$ and~$\bm{\xi}$ are unpaired. Future work could investigate paired settings in which~$\bm{x}_0$ and~$\bm{\xi}$ are provided jointly. For example, one may consider a task in which~$\bm{x}_0$ is a high-resolution image and~$\bm{\xi}$ is the corresponding low-resolution image. Another important direction is to evaluate the proposed model on real-image datasets in order to assess whether the improvements observed on the synthetic benchmark translate to practical image-generation settings.

\appendix

\section{Proofs and formula derivations} \label{apdx:derivation}

\subsection{Derivation of the evidence lower bound} \label{apdx:elbo_prop_derivation}
\begin{align*}
  & \log p_\theta(\bm{x}_0)  \\
  & \geq \mathbb{E}_{q(\bm{x}_{1:T}, \bm{\xi} | \bm{x}_0)}\left[\log \frac{p_\theta(\bm{x}_{0:T}, \bm{\xi})}{q(\bm{x}_{1:T}, \bm{\xi} | \bm{x}_0)}\right]   \\
  & = \mathbb{E}_{q(\bm{x}_{1:T}, \bm{\xi} | \bm{x}_0)}\left[\log \frac{\cancel{p(\bm{\xi})} p(\bm{x}_T | \bm{\xi}) \prod_{t=1}^T p_\theta(\bm{x}_{t-1}|\bm{x}_t)}{\cancel{q(\bm{\xi})} \prod_{t=1}^T q(\bm{x}_{t} | \bm{x}_{t-1}, \bm{\xi})}\right]  \\
  & = \mathbb{E}_{q(\bm{x}_{1:T}, \bm{\xi} | \bm{x}_0)}\left[\log \frac{p(\bm{x}_T | \bm{\xi}) p_\theta(\bm{x}_0 | \bm{x}_1) \prod_{t=2}^T p_\theta(\bm{x}_{t-1}|\bm{x}_t)}{q(\bm{x}_1 | \bm{x}_0, \bm{\xi}) \prod_{t=2}^T q(\bm{x}_{t} | \bm{x}_{t-1}, \bm{\xi})}\right]                \\
  & = \mathbb{E}_{q(\bm{x}_{1:T}, \bm{\xi} | \bm{x}_0)}\left[\log \frac{p(\bm{x}_T | \bm{\xi}) p_\theta(\bm{x}_0 | \bm{x}_1) \prod_{t=2}^T p_\theta(\bm{x}_{t-1}|\bm{x}_t)}{q(\bm{x}_1 | \bm{x}_0, \bm{\xi}) \prod_{t=2}^T q(\bm{x}_{t} | \bm{x}_{t-1}, \bm{x}_0, \bm{\xi})}\right]      \\
  & = \mathbb{E}_{q(\bm{x}_{1:T}, \bm{\xi} | \bm{x}_0)}\left[\log \frac{ p(\bm{x}_T | \bm{\xi}) p_\theta(\bm{x}_0 | \bm{x}_1) }{q(\bm{x}_1 | \bm{x}_0, \bm{\xi})} + \log \prod_{t=2}^T  \frac{p_\theta(\bm{x}_{t-1}|\bm{x}_t)}{q(\bm{x}_{t} | \bm{x}_{t-1}, \bm{x}_0, \bm{\xi})}\right]  \\
  & = \mathbb{E}_{q(\bm{x}_{1:T}, \bm{\xi} | \bm{x}_0)}\left[\log \frac{p(\bm{x}_T | \bm{\xi}) p_\theta(\bm{x}_0 | \bm{x}_1) }{q(\bm{x}_1 | \bm{x}_0, \bm{\xi}) } + \log \prod_{t=2}^T  \frac{p_\theta(\bm{x}_{t-1}|\bm{x}_t)}{\frac{q(\bm{x}_{t-1} | \bm{x}_{t}, \bm{x}_0, \bm{\xi}) \cancel{q(\bm{x}_t | \bm{x}_0, \bm{\xi})}}{\cancel{q(\bm{x}_{t-1}|\bm{x}_0,\bm{\xi})}}}\right]  \\
  & = \mathbb{E}_{q(\bm{x}_{1:T}, \bm{\xi} | \bm{x}_0)}\left[\log \frac{p(\bm{x}_T | \bm{\xi}) p_\theta(\bm{x}_0 | \bm{x}_1) }{\cancel{q(\bm{x}_1 | \bm{x}_0, \bm{\xi})}} + \log \frac{\cancel{q(\bm{x}_1|\bm{x}_0, \bm{\xi})}}{q(\bm{x}_T|\bm{x}_0, \bm{\xi})} + \log \prod_{t=2}^T  \frac{p_\theta(\bm{x}_{t-1}|\bm{x}_t)}{q(\bm{x}_{t-1} | \bm{x}_{t}, \bm{x}_0, \bm{\xi})}\right] \\
  & = \mathbb{E}_{q(\bm{x}_{1:T}, \bm{\xi} | \bm{x}_0)}\left[\log \frac{p(\bm{x}_T | \bm{\xi}) p_\theta(\bm{x}_0 | \bm{x}_1) }{q(\bm{x}_T|\bm{x}_0, \bm{\xi})} + \sum_{t=2}^T \log \frac{p_\theta(\bm{x}_{t-1}|\bm{x}_t)}{q(\bm{x}_{t-1} | \bm{x}_{t}, \bm{x}_0, \bm{\xi})}\right]  \\
  & = \mathbb{E}_{q(\bm{x}_{1:T}, \bm{\xi} | \bm{x}_0)}\left[\log p_\theta(\bm{x}_0 | \bm{x}_1)\right] + \mathbb{E}_{q(\bm{x}_{1:T}, \bm{\xi} | \bm{x}_0)}\left[\log \frac{p(\bm{x}_T | \bm{\xi})}{q(\bm{x}_T|\bm{x}_0, \bm{\xi})}\right] \\
  & \qquad + \sum_{t=2}^T \mathbb{E}_{q(\bm{x}_{1:T}, \bm{\xi} | \bm{x}_0)}\left[\log \frac{p_\theta(\bm{x}_{t-1}|\bm{x}_t)}{q(\bm{x}_{t-1} | \bm{x}_{t}, \bm{x}_0, \bm{\xi})}\right] \\
  & = \mathbb{E}_{q(\bm{x}_{1}, \bm{\xi} | \bm{x}_0)}\left[\log p_\theta(\bm{x}_0 | \bm{x}_1)\right] + \mathbb{E}_{q(\bm{x}_{T}, \bm{\xi} | \bm{x}_0)}\left[\log \frac{p(\bm{x}_T | \bm{\xi})}{q(\bm{x}_T|\bm{x}_0, \bm{\xi})}\right] \\
  & \qquad + \sum_{t=2}^T \mathbb{E}_{q(\bm{x}_{t}, \bm{x}_{t-1}, \bm{\xi} | \bm{x}_0)}\left[\log \frac{p_\theta(\bm{x}_{t-1}|\bm{x}_t)}{q(\bm{x}_{t-1} | \bm{x}_{t}, \bm{x}_0, \bm{\xi})}\right] \\
  & = \mathbb{E}_{q(\bm{\xi})} \mathbb{E}_{q(\bm{x}_{1} | \bm{x}_0, \bm{\xi})}\left[\log p_\theta(\bm{x}_0 | \bm{x}_1)\right] + \mathbb{E}_{q(\bm{\xi})} \mathbb{E}_{q(\bm{x}_{T} | \bm{x}_0, \bm{\xi})}\left[\log \frac{p(\bm{x}_T | \bm{\xi})}{q(\bm{x}_T | \bm{x}_0, \bm{\xi})}\right]          \\
  & \qquad + \sum_{t=2}^T \mathbb{E}_{q(\bm{\xi})} \mathbb{E}_{q(\bm{x}_{t} | \bm{x}_0, \bm{\xi})} \mathbb{E}_{q(\bm{x}_{t-1} | \bm{x}_{t}, \bm{x}_0, \bm{\xi})}\left[\log \frac{p_\theta(\bm{x}_{t-1}|\bm{x}_t)}{q(\bm{x}_{t-1} | \bm{x}_{t}, \bm{x}_0, \bm{\xi})}\right] \notag \\
  & = \mathbb{E}_{q(\bm{\xi})} \mathbb{E}_{q(\bm{x}_{1} | \bm{x}_0, \bm{\xi})}\left[\log p_\theta(\bm{x}_0 | \bm{x}_1)\right] - \mathbb{E}_{q(\bm{\xi})} \left[ D_{\text{KL}} \left(q(\bm{x}_T | \bm{x}_0, \bm{\xi}) \ || \ p(\bm{x}_T | \bm{\xi})\right) \right]  \\
  & \qquad - \sum_{t=2}^T \mathbb{E}_{q(\bm{\xi})} \mathbb{E}_{q(\bm{x}_{t} | \bm{x}_0, \bm{\xi})} \left[ D_{\text{KL}}\left(q(\bm{x}_{t-1} | \bm{x}_{t}, \bm{x}_0, \bm{\xi}) \ || \ p_\theta(\bm{x}_{t-1}|\bm{x}_t) \right)\right]. \notag
\end{align*}

\subsection{Derivation of the expression for the latent variable} \label{apdx:xt_x0_ep0_f_derivation}
Suppose that we have $2T$ random variables $\left\{\bm{\epsilon}_t^\ast, \bm{\epsilon}_t\right\}_{t=0}^{T-1} \overset{\text{iid}}{\sim} \mathcal{N}(\bm{\epsilon} \ | \ 0,I)$ and $\bm{\xi} \sim q(\bm{\xi})$. Then, for $t=1,\ldots,T$, we have
\begin{align}
  \bm{x}_t & = \sqrt{1 - \beta_t}\left( \bm{x}_{t-1} + \gamma_t \bm{\xi} \right) +\sqrt{\beta_t} \sigma_0 \bm{\epsilon}_{t-1}  \notag \\
  & = \sqrt{\alpha_t} \bm{x}_{t-1} + \sqrt{1-\alpha_t} \sigma_0 \bm{\epsilon}_{t-1} + \sqrt{\alpha_t} \gamma_t \bm{\xi}  \notag \\
  & = \sqrt{\alpha_t} \left(\sqrt{\alpha_{t-1}} \bm{x}_{t-2} +\sqrt{1-\alpha_{t-1}} \sigma_0 \bm{\epsilon}_{t-2}^\ast + \sqrt{\alpha_{t-1}} \gamma_{t-1} \bm{\xi}\right) + \sqrt{1-\alpha_t} \sigma_0 \bm{\epsilon}_{t-1} +\sqrt{\alpha_t} \gamma_t \bm{\xi} \notag \\
  & = \sqrt{\alpha_t \alpha_{t-1}} \bm{x}_{t-2} + \sqrt{\alpha_t - \alpha_t\alpha_{t-1}} \sigma_0 \bm{\epsilon}_{t-2}^\ast + \sqrt{1-\alpha_t} \sigma_0 \bm{\epsilon}_{t-1} + \left(\sqrt{\alpha_t} \gamma_t + \sqrt{\alpha_t \alpha_{t-1}} \gamma_{t-1}\right) \bm{\xi} \label{eq:marge_epsilon1}    \\
  & = \sqrt{\alpha_t \alpha_{t-1}} \bm{x}_{t-2} + \sqrt{1 - \alpha_t \alpha_{t-1}} \sigma_0 \bm{\epsilon}_{t-2}  + \left(\sqrt{\alpha_t} \gamma_t + \sqrt{\alpha_t \alpha_{t-1}} \gamma_{t-1}\right) \bm{\xi}  \label{eq:marge_epsilon2}  \\
  & = \ldots  \notag  \\
  & = \sqrt{\prod_{i=1}^t \alpha_i} \bm{x}_0 + \sqrt{1 - \prod_{i=1}^t \alpha_i} \sigma_0 \bm{\epsilon}_0 + \sum_{i=1}^t \sqrt{\prod_{j=i}^t \alpha_j} \gamma_i \bm{\xi}  \notag  \\
  & = \sqrt{\prod_{i=1}^t \alpha_i} \bm{x}_0 + \sqrt{1 - \prod_{i=1}^t \alpha_i} \sigma_0 \bm{\epsilon}_0 + \sum_{i=1}^t \sqrt{\frac{\prod_{j=0}^t \alpha_j}{\prod_{j=0}^{i-1} \alpha_j}} \gamma_i \bm{\xi}  \notag \\
  & = \sqrt{\bar{\alpha}_t} \bm{x}_0 + \sqrt{1 - \bar{\alpha}_t} \sigma_0 \bm{\epsilon}_0 + \sum_{i=1}^t \sqrt{\frac{\bar{\alpha}_t}{\bar{\alpha}_{i-1}}} \gamma_i \bm{\xi} \notag  \\
  & = \sqrt{\bar{\alpha}_t} \bm{x}_0 + \sqrt{1 - \bar{\alpha}_t} \left(\sigma_0 \bm{\epsilon}_0 + \frac{1}{\sqrt{1 - \bar{\alpha}_t}} \sum_{i=1}^t \sqrt{\frac{\bar{\alpha}_t}{\bar{\alpha}_{i-1}}} \gamma_i \bm{\xi}\right)  \notag  \\
  & = \sqrt{\bar{\alpha}_t} \bm{x}_0 + \sqrt{1 - \bar{\alpha}_t} \left(\sigma_0 \bm{\epsilon}_0 + \psi_t \bm{\xi}\right). \notag
\end{align}
See~\cite{luo2022understanding} for the transformation from \eqref{eq:marge_epsilon1} to \eqref{eq:marge_epsilon2}.

\subsection{Derivation of the conditional Gaussian expressions} \label{apdx:q_xt_x0_xi_derivation}
For $t=2,\ldots,T$, we have
\begin{align*}
  & q(\bm{x}_{t-1} | \bm{x}_{t}, \bm{x}_0, \bm{\xi})                                                                                                                                                                                                                                                                                                      \\
  & = \frac{q(\bm{x}_t | \bm{x}_{t-1}, \bm{x}_0, \bm{\xi})q(\bm{x}_{t-1} | \bm{x}_0, \bm{\xi})}{q(\bm{x}_t | \bm{x}_0, \bm{\xi})}  \\
  & \propto q(\bm{x}_t | \bm{x}_{t-1}, \bm{x}_0, \bm{\xi})q(\bm{x}_{t-1} | \bm{x}_0, \bm{\xi})  \\
& = \mathcal{N}\left(\bm{x}_t \ \middle| \ \sqrt{\alpha_t}(\bm{x}_{t-1} + \gamma_t \bm{\xi}), (1 - \alpha_t) \sigma_0^2 I)\right) \mathcal{N}\left(\bm{x}_{t-1} \ \middle| \ \sqrt{\bar{\alpha}_{t-1}} \bm{x}_0  + \sqrt{1 - \bar{\alpha}_{t-1}} \psi_{t-1} \bm{\xi}, \left(1 - \bar{\alpha}_{t-1}\right) \sigma_0^2 I\right)  \\
& = \mathcal{N}\left(\bm{x}_{t-1} \ \middle| \ \frac{1}{\sqrt{\alpha_t}} \bm{x}_{t} - \gamma_t \bm{\xi}, \frac{1 - \alpha_t}{\alpha_t} \sigma_0^2 I)\right) \mathcal{N}\left(\bm{x}_{t-1} \ \middle| \ \sqrt{\bar{\alpha}_{t-1}} \bm{x}_0  + \sqrt{1 - \bar{\alpha}_{t-1}} \psi_{t-1} \bm{\xi}, \left(1 - \bar{\alpha}_{t-1}\right) \sigma_0^2 I\right) \\
& \propto \mathcal{N}\left(\bm{x}_{t-1} \ \middle| \ \tilde{\mu}(\bm{x}_t, \bm{x}_0, \bm{\xi}), \tilde{\beta}_t I \right),
\end{align*}
where $\tilde{\mu}(\bm{x}_t, \bm{x}_0, \bm{\xi})$ and $\tilde{\beta}_t$ are obtained by multiplying the two normal distributions:
\begin{align*}
\tilde{\mu}(\bm{x}_t, \bm{x}_0, \bm{\xi}) & = \frac{1}{\frac{1 - \alpha_t}{\alpha_t} + (1 - \bar{\alpha}_{t-1})} \left((1-\bar{\alpha}_{t-1})\left(\frac{1}{\sqrt{\alpha}_t}\bm{x}_t - \gamma_t \bm{\xi}\right) + \frac{1 - \alpha_t}{\alpha_t}\left(\sqrt{\bar{\alpha}_{t-1}} \bm{x}_0  + \sqrt{1 - \bar{\alpha}_{t-1}} \psi_{t-1} \bm{\xi}\right)\right) \\
& = \frac{\alpha_t}{1 - \bar{\alpha}_t}\left(\frac{1-\bar{\alpha}_{t-1}}{\sqrt{\alpha_t}} \bm{x}_t - (1-\bar{\alpha}_{t-1}) \gamma_t \bm{\xi} + \frac{(1-\alpha_t)\sqrt{\bar{\alpha}_{t-1}}}{\alpha_t} \bm{x}_0 + \frac{(1-\alpha_t)\sqrt{1 - \bar{\alpha}_{t-1}} \psi_{t-1}}{\alpha_t}  \bm{\xi} \right)        \\
& = \frac{\sqrt{\alpha_t} (1-\bar{\alpha}_{t-1})}{1 - \bar{\alpha}_t} \bm{x}_t + \frac{(1 - \alpha_t)\sqrt{\bar{\alpha}_{t-1}}}{1 - \bar{\alpha}_t} \bm{x}_0 + \nu_t \bm{\xi},                                                                                                                                   \\
\tilde{\beta}_t                           & = \frac{\frac{1-\alpha_t}{\alpha_t} \sigma_0^2 \left(1-\bar{\alpha}_{t-1}\right) \sigma_0^2}{\frac{1-\alpha_t}{\alpha_t} \sigma_0^2 + \left(1-\bar{\alpha}_{t-1}\right) \sigma_0^2}
= \frac{(1 - \alpha_t)(1 - \bar{\alpha}_{t-1})}{1 - \bar{\alpha}_t} \sigma_0^2.
\end{align*}

\subsection{Proof of the lemma on the conditional mean} \label{apdx:proof_lemma_mu}
From \eqref{eq:xt_x0_ep0_f}, we have
\begin{align*}
\bm{x}_0 = \frac{1}{\sqrt{\bar{\alpha}_t}}\left(\bm{x}_t - \sqrt{1 - \bar{\alpha}_t}\left(\sigma_0 \bm{\epsilon}_0 + \psi_t \bm{\xi}\right)\right).
\end{align*}
Substituting this into \eqref{eq:tilde_mu_x0} yields
\begin{align*}
\tilde{\mu}(\bm{x}_t, \bm{x}_0, \bm{\xi}) & = \frac{\sqrt{\alpha_t} (1-\bar{\alpha}_{t-1})}{1 - \bar{\alpha}_t} \bm{x}_t + \frac{(1 - \alpha_t)\sqrt{\bar{\alpha}_{t-1}}}{1 - \bar{\alpha}_t} \frac{1}{\sqrt{\bar{\alpha}_t}}\left(\bm{x}_t - \sqrt{1 - \bar{\alpha}_t}\left(\sigma_0 \bm{\epsilon}_0 + \psi_t \bm{\xi}\right)\right) + \nu_t \bm{\xi} \\
& = \frac{1}{\sqrt{\alpha_t}} \bm{x}_t - \frac{1 - \alpha_t}{\sqrt{1 - \bar{\alpha}_t} \sqrt{\alpha_t}} \left(\sigma_0 \bm{\epsilon}_0 + \left(\psi_t  - \frac{\sqrt{1 - \bar{\alpha}_t} \sqrt{\alpha_t}}{1 - \alpha_t} \nu_t \right) \bm{\xi}\right)                                                        \\
& = \frac{1}{\sqrt{\alpha_t}} \bm{x}_t - \frac{1 - \alpha_t}{\sqrt{1 - \bar{\alpha}_t} \sqrt{\alpha_t}} \left(\sigma_0 \bm{\epsilon}_0 + \phi_t \bm{\xi}\right),
\end{align*}
where $\phi_t = \psi_t  - \frac{\sqrt{1 - \bar{\alpha}_t} \sqrt{\alpha_t}}{1 - \alpha_t} \nu_t$.
We can then expand $\phi_t$ as follows:
\begin{align}
\phi_t & = \psi_t  - \frac{\sqrt{1 - \bar{\alpha}_t} \sqrt{\alpha_t}}{1 - \alpha_t} \nu_t  \notag  \\
& =
\psi_t  - \frac{\sqrt{1 - \bar{\alpha}_t} \sqrt{\alpha_t}}{1 - \alpha_t} \frac{(1 - \alpha_t)\sqrt{1 - \bar{\alpha}_{t-1}} \psi_{t-1} - \alpha_t (1-\bar{\alpha}_{t-1})\gamma_t}{1 - \bar{\alpha}_t}  \notag \\
& = \psi_t - \frac{\sqrt{1- \bar{\alpha}_{t-1}}\sqrt{\alpha_t}}{\sqrt{1-\bar{\alpha_t}}} \psi_{t-1} + \frac{\alpha_t \sqrt{\alpha_t} (1-\bar{\alpha}_{t-1})}{(1-\alpha_t)\sqrt{1 - \bar{\alpha}_t}} \gamma_t. \label{eq:phi1}
\end{align}
From the definition of~$\psi_t$, we obtain
\begin{align}
\psi_t = \frac{\sqrt{\alpha_t}\sqrt{1 - \bar{\alpha}_{t-1}}}{\sqrt{1 - \bar{\alpha}_t}} \psi_{t-1} + \frac{\sqrt{\alpha_t}}{\sqrt{1 - \bar{\alpha}_t}} \gamma_t,  \label{eq:psi_t_psi_t-1}
\end{align}
Substituting \eqref{eq:psi_t_psi_t-1} into \eqref{eq:phi1}, we obtain
\begin{align*}
\phi_t & = \frac{\sqrt{\alpha_t}\sqrt{1 - \bar{\alpha}_{t-1}}}{\sqrt{1 - \bar{\alpha}_t}} \psi_{t-1} + \frac{\sqrt{\alpha_t}}{\sqrt{1 - \bar{\alpha}_t}} \gamma_t - \frac{\sqrt{1- \bar{\alpha}_{t-1}}\sqrt{\alpha_t}}{\sqrt{1-\bar{\alpha_t}}} \psi_{t-1} + \frac{\alpha_t \sqrt{\alpha_t} (1-\bar{\alpha}_{t-1})}{(1-\alpha_t)\sqrt{1 - \bar{\alpha}_t}} \gamma_t \\
& = \frac{\sqrt{\alpha_t}(1 - \alpha_t) + \alpha_t \sqrt{\alpha_t} (1-\bar{\alpha}_{t-1})}{(1-\alpha_t)\sqrt{1 - \bar{\alpha}_t}}  \\
& = \frac{\sqrt{\alpha_t} \sqrt{1 - \bar{\alpha}_t}}{1-\alpha_t} \gamma_t.
\end{align*}

\subsection{Derivation of the $\mathcal{L}_1$ term} \label{apdx:L1_derivation}
For the $\mathcal{L}_1$ term, from \eqref{eq:xt_x0_ep0_f} with $t=1$, we have
\begin{align*}
& \mathbb{E}_{q(\bm{x}_{1} | \bm{x}_0, \bm{\xi})}\left[\log p_\theta(\bm{x}_0 | \bm{x}_1) \right]  \\
& = \mathbb{E}_{q(\bm{x}_{1} | \bm{x}_0, \bm{\xi})}\left[ \log \mathcal{N}\left(\bm{x}_0 \ \middle| \ \mu_\theta(\bm{x}_1, 1), \sigma_1^2 I\right) \right]  \\
& = - \frac{1}{2 \sigma_1^2} \mathbb{E}_{q(\bm{x}_{1} | \bm{x}_0, \bm{\xi})}\left[\left\|\bm{x}_0 - \mu_\theta(\bm{x}_1, 1)\right\|^2 \right] + C_2  \\
& = - \frac{1}{2 \sigma_1^2} \mathbb{E}_{q(\bm{x}_{1} | \bm{x}_0, \bm{\xi})}\left[ \left\|\bm{x}_0 - \left(\frac{1}{\sqrt{\alpha_1}} \bm{x}_1 - \frac{1 - \alpha_1}{\sqrt{1 - \bar{\alpha}_1} \sqrt{\alpha_1}} \epsilon_\theta (\bm{x}_1, 1) \right)\right\|^2 \right] + C_2  \\
& = - \frac{1}{2 \sigma_1^2} \mathbb{E}_{\mathcal{N}(\bm{\epsilon}_0 | 0, I)} \left[ \left\|\bm{x}_0 - \left(\frac{1}{\sqrt{\alpha_1}} \left(\sqrt{\alpha}_1 \bm{x}_0 + \sqrt{1 - \alpha_1} \left(\sigma_0 \bm{\epsilon}_0 + \psi_1 \bm{\xi}\right)\right) - \frac{\sqrt{1 - \alpha_1}}{\sqrt{\alpha_1}} \epsilon_\theta (\bm{x}_1, 1) \right)\right\|^2 \right] + C_2 \\
& = - \frac{1}{2 \sigma_1^2} \mathbb{E}_{\mathcal{N}(\bm{\epsilon}_0 | 0, I)} \left[\left\|\frac{\sqrt{1-\alpha_1}}{\sqrt{\alpha_1}}\left(\sigma_0 \bm{\epsilon}_0 + \psi_1 \bm{\xi} - \epsilon_\theta (\bm{x}_1, 1)\right) \right\|^2 \right] + C_2  \\
& = - \frac{1 - \alpha_1}{2 \sigma_1^2 \alpha_1} \mathbb{E}_{\mathcal{N}(\bm{\epsilon}_0 | 0, I)} \left[ \left\|\sigma_0 \bm{\epsilon}_0 + \psi_1 \bm{\xi} - \epsilon_\theta (\bm{x}_1, 1) \right\|^2 \right] + C_2  \\
& = - \frac{1 - \alpha_1}{2 \sigma_1^2 \alpha_1} \mathbb{E}_{\mathcal{N}(\bm{\epsilon}_0 | 0, I)} \left[ \left\|\sigma_0 \bm{\epsilon}_0 + \phi_1 \bm{\xi} - \epsilon_\theta (\bm{x}_1, 1) \right\|^2 \right] + C_2  \\
& = - \lambda_1 \mathbb{E}_{\mathcal{N}(\bm{\epsilon}_0 | 0, I)} \left[ \left\|\sigma_0 \bm{\epsilon}_0 + \phi_1 \bm{\xi} - \epsilon_\theta (\bm{x}_1, 1) \right\|^2 \right] + C_2.
\end{align*}

\subsection{Proof of Proposition~\ref{prop:brightness_stat}} \label{apdx:proof_brightness_stat}
Because~$q(\bm{\xi}) = \mathcal{N}(\bm{\xi} \mid 0, \sigma_c^2 \mathbf{1}_{n \times n})$ is a rank-one Gaussian supported on~$\mathrm{span}\{\mathbf{1}_n\}$, there exists a scalar Gaussian random variable~$z \sim \mathcal{N}(0,\sigma_c^2)$ such that
\begin{align}
\bm{\xi} = z \mathbf{1}_n \quad \text{a.s.} \label{eq:xi_rank_one}
\end{align}
Applying the linear functional~$B_n(\bm{x}) = n^{-1}\mathbf{1}_n^\top \bm{x}$ to \eqref{eq:xt_x0_ep0_f}, we obtain
\begin{align*}
B_n(\bm{x}_t)
&= \sqrt{\bar{\alpha}_t} B_n(\bm{x}_0) + \sqrt{1-\bar{\alpha}_t}\left(\sigma_0 B_n(\bm{\epsilon}_0) + \psi_t B_n(\bm{\xi})\right).
\end{align*}
By \eqref{eq:xi_rank_one},~$B_n(\bm{\xi}) = z$. In addition,
\begin{align*}
B_n(\bm{\epsilon}_0) = \frac{1}{n}\sum_{i=1}^n \epsilon_{0,i} \sim \mathcal{N}\left(0,\frac{1}{n}\right),
\end{align*}
because the entries of~$\bm{\epsilon}_0$ are independent standard normal variables. Denoting~$\varepsilon_B := B_n(\bm{\epsilon}_0)$ proves \eqref{eq:brightness_prop}. Since~$\varepsilon_B$ and~$z$ are independent and both have mean zero, \eqref{eq:brightness_var_prop} follows immediately.

For the standard diffusion model, apply~$B_n$ to
\begin{align*}
\bm{x}_t^\mathrm{std} = \sqrt{\bar{\alpha}_t}\bm{x}_0 + \sqrt{1-\bar{\alpha}_t} \bm{\epsilon}_0,
\end{align*}
which yields \eqref{eq:brightness_std}; the variance formula \eqref{eq:brightness_var_std} follows in the same way.

\subsection{Proof of the $v$-prediction proposition} \label{apdx:proof_prop_vpred}
Following \cite{salimans2022progressive}, from \eqref{eq:xt_x0_ep0_f}, we have
\begin{align}
\bm{x}_0 & = \sqrt{\bar{\alpha}_t} \bm{x}_t - \sqrt{1 - \bar{\alpha}_t} \bm{v}_t, \label{eq:x0_xt_vt}  \\
\bm{v}_t & = \sqrt{\bar{\alpha}_t} \left(\sigma_0 \bm{\epsilon}_0 + \psi_t \bm{\xi} \right) - \sqrt{1 - \bar{\alpha}_t} \bm{x}_0, \notag
\end{align}
where $\bm{v}_t = \frac{d \bm{x}_t}{d \omega_t}$ and $\omega_t$ is the angle satisfying $\cos(\omega_t)=\sqrt{\bar{\alpha}_t}, \sin(\omega_t)=\sqrt{1 - \bar{\alpha}_t}$.
Substituting \eqref{eq:x0_xt_vt} into \eqref{eq:tilde_mu_x0} yields
\begin{align*}
\tilde{\mu}(\bm{x}_t, \bm{x}_0, \bm{\xi}) & = \frac{\sqrt{\alpha_t} (1-\bar{\alpha}_{t-1})}{1 - \bar{\alpha}_t} \bm{x}_t + \frac{(1 - \alpha_t)\sqrt{\bar{\alpha}_{t-1}}}{1 - \bar{\alpha}_t} \left(\sqrt{\bar{\alpha}_t} \bm{x}_t - \sqrt{1 - \bar{\alpha}_t} \bm{v}_t \right) + \nu_t \bm{\xi}  \\
& = \frac{\sqrt{\alpha}_t(1-\bar{\alpha}_{t-1}) + (1 - \alpha_t)\sqrt{\bar{\alpha}_{t-1} \bar{\alpha}_t}}{1 - \bar{\alpha_t}} \bm{x}_t - \frac{(1 - \alpha_t) \sqrt{\bar{\alpha}_{t-1}}}{\sqrt{1 - \bar{\alpha}_t}} \left(\bm{v}_t - \frac{\sqrt{1 - \bar{\alpha}_t}}{(1 - \alpha_t) \sqrt{\bar{\alpha}_{t-1}}} \nu_t  \bm{\xi}\right).
\end{align*}
Since $\nu_t=0$ under the noise-matching strategy, this simplifies to
\begin{align*}
\tilde{\mu}(\bm{x}_t, \bm{x}_0, \bm{\xi}) & = \frac{\sqrt{\alpha}_t(1-\bar{\alpha}_{t-1}) + (1 - \alpha_t)\sqrt{\bar{\alpha}_{t-1} \bar{\alpha}_t}}{1 - \bar{\alpha_t}} \bm{x}_t - \frac{(1 - \alpha_t) \sqrt{\bar{\alpha}_{t-1}}}{\sqrt{1 - \bar{\alpha}_t}} \bm{v}_t  \\
& = \frac{\sqrt{\alpha}_t(1-\bar{\alpha}_{t-1}) + (1 - \alpha_t)\sqrt{\bar{\alpha}_{t-1} \bar{\alpha}_t}}{1 - \bar{\alpha_t}} \bm{x}_t - \frac{(1 - \alpha_t) \sqrt{\bar{\alpha}_{t-1}}}{\sqrt{1 - \bar{\alpha}_t}} \left(\sqrt{\bar{\alpha}_t}\left(\sigma_0 \bm{\epsilon}_0 + \psi_t \bm{\xi} \right) - \sqrt{1 - \bar{\alpha}_t} \bm{x}_0\right).
\end{align*}
Therefore, if we parameterize $\mu_\theta(\bm{x}_t, t)$ as
\begin{align*}
\mu_\theta(\bm{x}_t, t) & = \frac{\sqrt{\alpha}_t(1-\bar{\alpha}_{t-1}) + (1 - \alpha_t)\sqrt{\bar{\alpha}_{t-1} \bar{\alpha}_t}}{1 - \bar{\alpha_t}} \bm{x}_t - \frac{(1 - \alpha_t) \sqrt{\bar{\alpha}_{t-1}}}{\sqrt{1 - \bar{\alpha}_t}} v_\theta(\bm{x}_t, t),
\end{align*}
then the KL divergence in~$\mathcal{L}_3$ can be written as follows for $t=2,\ldots,T$:
\begin{align*}
& D_{\text{KL}}\left(q(\bm{x}_{t-1} | \bm{x}_{t}, \bm{x}_0, \bm{\xi}) \ || \ p_\theta(\bm{x}_{t-1}|\bm{x}_t) \right)  \\
& = \frac{1}{2 \sigma_t^2} \mathbb{E}_{q(\bm{x}_{t-1} | \bm{x}_{t}, \bm{x}_0, \bm{\xi})} \left[\left\|\tilde{\mu}(\bm{x}_t, \bm{x}_0, \bm{\xi}) - \mu_\theta(\bm{x}_t, t)\right\|^2 \right] + C_1  \\
& = \frac{\bar{\alpha}_{t-1} (1-\alpha_t)^2}{2 \sigma_t^2 (1-\bar{\alpha}_t)} \mathbb{E}_{\mathcal{N}(\bm{\epsilon}_0 | 0, I)} \left[\left\|\sqrt{\bar{\alpha}_t}\left(\sigma_0 \bm{\epsilon}_0 + \psi_t  \bm{\xi} \right) - \sqrt{1 - \bar{\alpha}_t} \bm{x}_0 - v_\theta(\bm{x}_t, t)\right\|^2 \right] + C_3,
\end{align*}
where $C_3$ is a constant independent of $\theta$.

For~$\mathcal{L}_1$ in \eqref{eq:elbo_prop}, we have
\begin{align*}
& \mathbb{E}_{q(\bm{x}_{1} | \bm{x}_0, \bm{\xi})}\left[\log p_\theta(\bm{x}_0 | \bm{x}_1) \right]  \\
& = \mathbb{E}_{q(\bm{x}_{1} | \bm{x}_0, \bm{\xi})}\left[ \log \mathcal{N}\left(\bm{x}_0 \ \middle| \ \mu_\theta(\bm{x}_1, 1), \sigma_1^2 I\right) \right]  \\
& = - \frac{1}{2 \sigma_1^2} \mathbb{E}_{q(\bm{x}_{1} | \bm{x}_0, \bm{\xi})}\left[\left\|\bm{x}_0 - \mu_\theta(\bm{x}_1, 1)\right\|^2 \right] + C_2  \\
& = - \frac{1}{2 \sigma_1^2} \mathbb{E}_{q(\bm{x}_{1} | \bm{x}_0, \bm{\xi})}\left[ \left\|\bm{x}_0 - \frac{\sqrt{\alpha}_1(1-\bar{\alpha}_{0}) + (1 - \alpha_1)\sqrt{\bar{\alpha}_{0} \bar{\alpha}_1}}{1 - \bar{\alpha}_1} \bm{x}_1 + \frac{(1 - \alpha_1) \sqrt{\bar{\alpha}_{0}}}{\sqrt{1 - \bar{\alpha}_1}} v_\theta(\bm{x}_1, 1)\right\|^2 \right] + C_2 \\
& = - \frac{1}{2 \sigma_1^2} \mathbb{E}_{q(\bm{x}_{1} | \bm{x}_0, \bm{\xi})}\left[ \left\|\bm{x}_0 - \sqrt{\alpha_1} \bm{x}_1 + \sqrt{1 - \alpha}_1 v_\theta(\bm{x}_1, 1)\right\|^2 \right] + C_2  \\
& = - \frac{1}{2 \sigma_1^2} \mathbb{E}_{q(\bm{x}_{1} | \bm{x}_0, \bm{\xi})}\left[ \left\|-\sqrt{1 - \alpha_1} \bm{v}_1 + \sqrt{1 - \alpha}_1 v_\theta(\bm{x}_1, 1)\right\|^2 \right] + C_2  \\
& = - \frac{1 - \alpha_1}{2 \sigma_1^2} \mathbb{E}_{q(\bm{x}_{1} | \bm{x}_0, \bm{\xi})}\left[ \left\| \bm{v}_1 -  v_\theta(\bm{x}_1, 1)\right\|^2 \right] + C_2  \\
& = - \frac{1 - \alpha_1}{2 \sigma_1^2} \mathbb{E}_{\mathcal{N}(\bm{\epsilon}_0 | 0, I)}\left[ \left\| \sqrt{\alpha_1} \left(\sigma_0 \bm{\epsilon}_0 + \psi_1 \bm{\xi} \right) - \sqrt{1 - \alpha_1} \bm{x}_0 - v_\theta(\bm{x}_1, 1)\right\|^2 \right] + C_2.
\end{align*}
Therefore, under the noise-matching strategy, the training loss that maximizes the evidence lower bound in~\eqref{eq:elbo_prop} with respect to~$\theta$ under the~$v$-prediction formulation is
\begin{align*}
\ell^v(\theta) = \mathbb{E}_{q(\bm{\xi}), \mathcal{U}(t | 1,T), \mathcal{N}(\bm{\epsilon}_0 | 0, I)}  \left[\frac{\bar{\alpha}_{t-1} (1-\alpha_t)^2}{2 \sigma_t^2 (1-\bar{\alpha}_t)} \left\|\sqrt{\bar{\alpha}_t}\left(\sigma_0 \bm{\epsilon}_0 + \psi_t \bm{\xi} \right) - \sqrt{1 - \bar{\alpha}_t} \bm{x}_0 - v_\theta(\bm{x}_t, t)\right\|^2 \right],
\end{align*}
where~$\bm{x}_t$ is given by \eqref{eq:xt_x0_ep0_f}.


\section{Additional experimental results}

\subsection{Training with data scaling} \label{apdx:exp_data_scaling}
The Base model was trained on the Cylinder dataset with dimensionality~$n=200$ using data scaling with scaling parameter~$\rho$. Specifically,~$\rho$ was set to one of~$0.7, 0.8, 0.9, 1.1, 1.2$, or~$1.3$ (note that~$\rho=1.0$ corresponds to the case without data scaling).
The 1WD and MMD values during training for each configuration are shown in Figure~\ref{fig:1wd_mmd_rescale}. For comparison, the figure also includes the results for the Base model without data scaling~($\rho=1.0$) and the proposed model~($\sigma_c^2=1.0$).
Applying data scaling with~$\rho=1.1$ to the Base model yields smaller 1WD and MMD values than the case without scaling~($\rho=1.0$).
However, the proposed model achieves even smaller 1WD and MMD values.
\begin{figure}[tbhp]
\centering
\includegraphics[width=0.8\linewidth,clip]{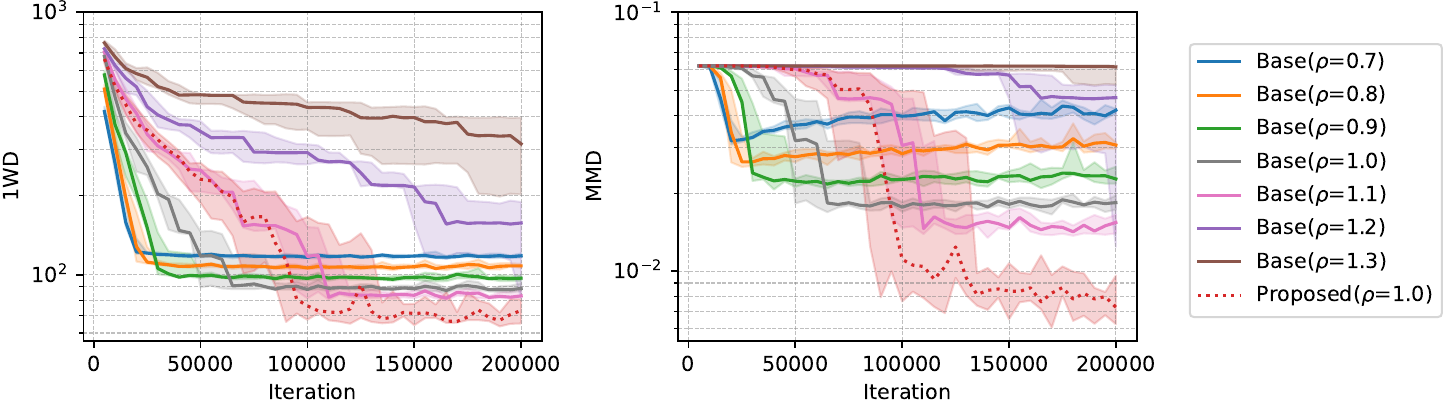}
\caption{Changes in 1WD and MMD during the training of the Base model~($n=200$) with data scaling using various scaling parameters~$\rho$. For comparison, the results of the proposed model without data scaling~($\rho=1.0$) and the Proposed model~($\sigma_c^2=1.0$) are also included.}
\label{fig:1wd_mmd_rescale}
\end{figure}

Next, for each Base model trained with data scaling, we generated~5000 samples and compared the distribution of their average brightness~$B_n(\bm{x}_0)$ with that of the test dataset. The results are shown in Figure~\ref{fig:histgram_scale}. As the figure shows, applying data scaling to the Cylinder dataset~($n=200$) does not substantially change the distribution of~$B_n(\bm{x}_0)$ in the generated samples. This again suggests that data scaling alone does not resolve the difficulty of generating data with extreme average brightness.
\begin{figure}[tbhp]
\centering
\includegraphics[width=1.0\linewidth,clip]{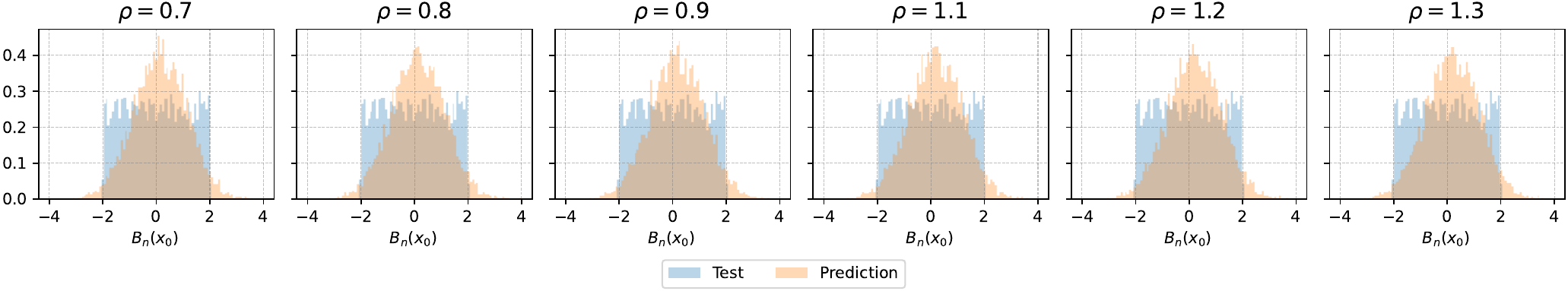}
\caption{Comparison of average brightness~$B_n(\bm{x}_0)$ distributions of the Base model~($n=200$) with data scaling using various scaling parameters~$\rho$.}
\label{fig:histgram_scale}
\end{figure}

\subsubsection{Comparison of average brightness distributions for $v$-prediction models} \label{apdx:brightness_vpred}
Figure~\ref{fig:histgram_vpred} compares the distributions of~$B_n(\bm{x}_0)$ for samples generated by each $v$-prediction model with that of the test dataset.
For each~$n$, the top, middle, and bottom rows correspond to the Base model, the Offset noise model~($\sigma_c^2=0.1$), and the proposed model~($\sigma_c^2=1.0$), respectively. In each case, the model used is the one obtained after the final training step.
\begin{figure}[tbhp]
\centering
\includegraphics[width=1.0\linewidth,clip]{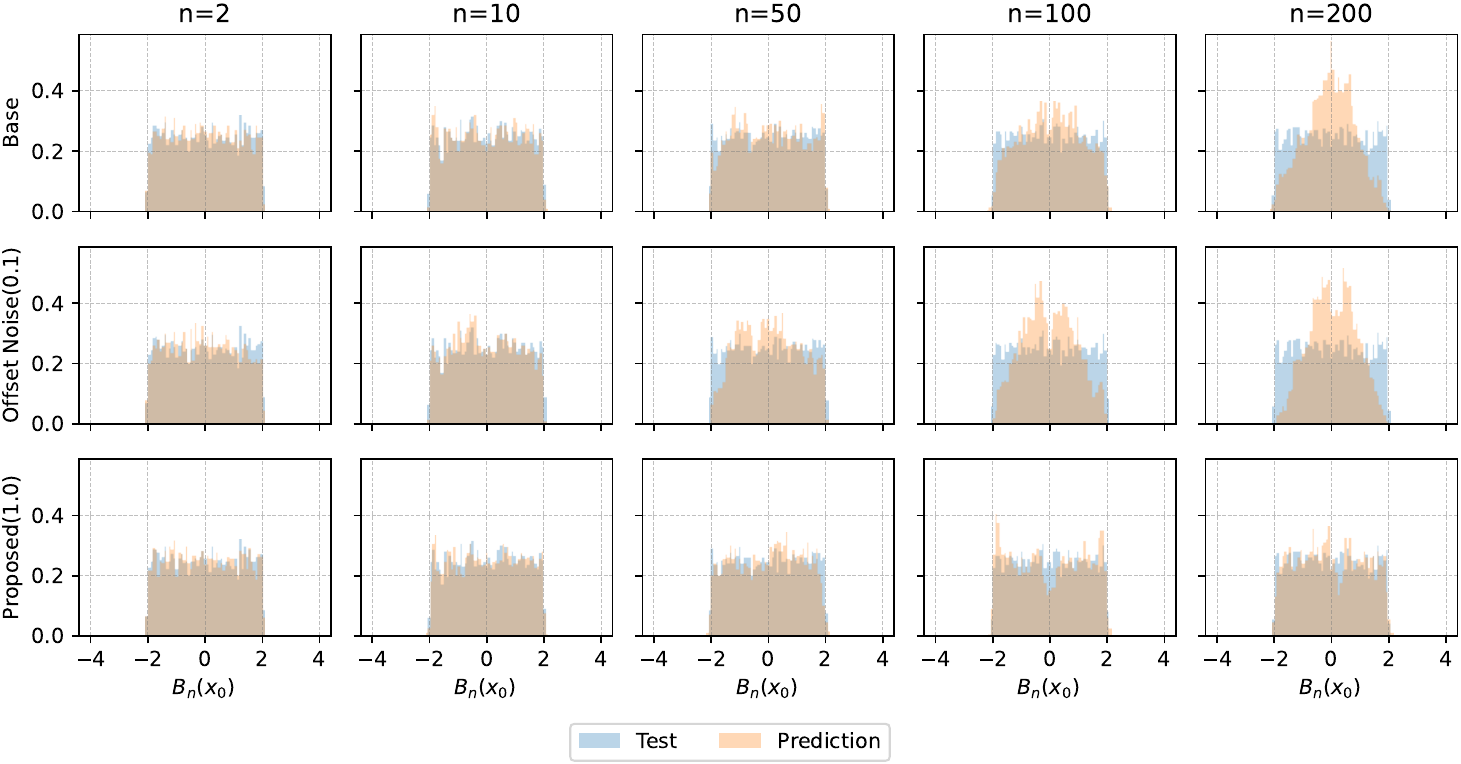}
\caption{Comparison of distributions of average brightness~$B_n(\bm{x}_0)$ between the test data and the generated data using $v$-prediction models.}
\label{fig:histgram_vpred}
\end{figure}

\section{Python code for generating the Cylinder dataset} \label{apdx:cylinder_code}
Figure~\ref{fig:cylinder_code} shows the Python code for generating the Cylinder dataset.
\begin{figure}[htb]
\begin{lstlisting}[language=Python]
import torch

def cylinder_dataset(size: int, dim: int, r: float = 0.5, top_center: float = 2.0):
"""
Generate a cylinder-shaped dataset in n-dimensional space.

Args:
    size: Number of samples to generate.
    dim: Dimensionality of the space.
    r: Radius of the cylinder (relative to the norm of a vector of ones).
    top_center: The center of the top of the cylinder.

Returns:
    Tensor containing the generated cylinder dataset.
"""

# Create a vector of all ones, which will define the cylinder's axis direction.
vec_ones = torch.ones(dim)

# Adjust the radius relative to the dimensionality using L2 norm.
adjusted_r = r * vec_ones.norm(p=2)

# Generate random unit vectors orthogonal to vec_ones(the cylinder's axis).
vec_ortho = torch.randn(size, dim)
vec_ortho = vec_ortho - vec_ortho.mm(vec_ones[:, None]) / dim * vec_ones
vec_ortho = vec_ortho / vec_ortho.norm(p=2, dim=1)[:, None]

# Scale the orthogonal vectors by random radii within the cylinder's radius.
vec_ortho = vec_ortho * torch.rand(size).mul(adjusted_r)[:, None]

# Scale vec_ones to random heights within [-top_center, top_center].
vec_ones = vec_ones * torch.rand(size).mul(2 * top_center).sub(top_center)[:, None]

# Combine the axis and orthogonal components to form the final dataset.
data = vec_ones + vec_ortho

return data
\end{lstlisting}
\caption{Python code for generating the Cylinder dataset.}
\label{fig:cylinder_code}
\end{figure}

\bibliographystyle{plainnat}
\bibliography{references}

\end{document}